\theoremstyle{plain}
\newtheorem{theorem}{Theorem}
\theoremstyle{definition}
\newtheorem{definition}[theorem]{Definition}
\newcommand{\red}[1]{\textcolor{red}{#1}}
\DeclareMathOperator{\R}{\mathbb{R}}
\DeclareMathOperator{\N}{\mathbb{N}}
\newcommand{\pb}[0]{\pagebreak}
\DeclareMathOperator{\bc}{\backslash}
\DeclareMathOperator{\mX}{\mathcal{X}}
\DeclareMathOperator{\mY}{\mathcal{Y}}
\DeclareMathOperator{\mW}{\mathcal{W}}
\DeclareMathOperator{\mL}{\mathcal{L}}
\DeclareMathOperator{\bx}{\mathbf{x}}
\DeclareMathOperator{\by}{\mathbf{y}}
\DeclareMathOperator{\bw}{\mathbf{w}}
\DeclareMathOperator{\bv}{\mathbf{v}}
\title{Stochastic Sparse Sampling: A Framework for Variable-Length Medical Time Series Classification}
\author{%
    Xavier Mootoo$^{1,2,3}$, Alan A. Díaz-Montiel$^{3}$, Milad Lankarany$^{3,4}$, Hina Tabassum$^{1,4}$ \\
    \texttt{xmootoo@my.yorku.ca, adiazmon@tcd.ie, milad.lankarany@uhn.ca,}  \\ \texttt{hinat@yorku.ca}
}
\begin{document}

\maketitle

\vspace{-0.5cm}

\begin{center}
$^{1}$York University \\
$^{2}$Vector Institute \\
$^{3}$University Health Network \\
$^{4}$University of Toronto  \\
\end{center}

\begin{abstract}
While the majority of time series classification research has focused on modeling fixed-length sequences, variable-length time series classification (VTSC) remains critical in healthcare, where sequence length may vary among patients and events. To address this challenge, we propose \textbf{S}tochastic \textbf{S}parse \textbf{S}ampling (SSS), a novel VTSC framework developed for medical time series. SSS manages variable-length sequences by sparsely sampling fixed windows to compute local predictions, which are then aggregated and calibrated to form a global prediction. We apply SSS to the task of seizure onset zone (SOZ) localization, a critical VTSC problem requiring identification of seizure-inducing brain regions from variable-length electrophysiological time series. We evaluate our method on the Epilepsy iEEG Multicenter Dataset, a heterogeneous collection of intracranial electroencephalography (iEEG) recordings obtained from four independent medical centers. SSS demonstrates superior performance compared to state-of-the-art (SOTA) baselines across most medical centers, and superior performance on all out-of-distribution (OOD) unseen medical centers. Additionally, SSS naturally provides post-hoc insights into local signal characteristics related to the SOZ, by visualizing temporally averaged local predictions throughout the signal.
\end{abstract}

\section{Introduction \label{intro}}
Artificial intelligence (AI) in medicine has received significant attention in recent years, with various applications to clinical diagnosis and treatment planning \citep{rajpurkar2022ai}. Despite its advancements, the actual integration into everyday clinical practice remains limited, with much of it attributed to the challenges of handling the complexity and variability in medical data. One particularly challenging aspect of this variability lies in the nature of medical time series data. Variable-length time series are prevalent throughout many areas of healthcare, including heart rate monitoring, blood glucose measurements, and electrophysiological recordings where sequence length can vary dependent on the recording or length of an event \citep{agliari2020detecting, deutsch1994time, walther2023systematic}. Yet, the majority of time series classification (TSC) literature focuses solely on methods that process fixed-length sequences \citep{tscsurvey1, tscsurvey2}. 

At the same time, healthcare applications require greater interpretability from modern time series methods to expand their applicability in critical domains and accelerate clinical adoption \citep{amann2020explainability}. This interpretability is especially crucial in contexts where the relationship between pathology and signal characteristics is not well understood, as it can provide valuable insights for both clinicians and scientists. Recent studies in time series classification (TSC) have explored the explainability of specific signal segments, as opposed to full-signal analysis, which proves particularly useful for uncovering important characteristics such as motifs, anomalies, or frequency patterns \citep{millet, crabbe2021explaining, huang2024shapelet}. However, there still remains a significant need for models with built-in interpretability in medical applications. Such methods would allevaiate the burden of implementing both a base model and a specialized interpretability method\textemdash which may require more domain expertise\textemdash and may more effectively facilitate clinical adoption.

The need for variable-length time series classification (VSTC) methods with built-in interpretability is particularly relevant in seizure onset zone (SOZ) localization\textemdash the task of identifying brain regions from which seizures originate\textemdash as effective treatment requires analysis of variable-length signals \citep{balaji2022seizure}. The World Health Organization (WHO) reports epilepsy affects over 50 million people globally, establishing it as one of the most common yet poorly understood neurological disorders \citep{world2019epilepsy, stafstrom2015seizures}. Additionally, one-third of patients do not respond to antiepileptic drugs, making surgery the last resort and accurate SOZ localization essential for effectively planning the operation. The process of SOZ identification involves a two-step procedure: initial implantation of electrodes in areas suspected to contain the SOZ, followed by recording and visual analysis of intracranial electroencephalography (iEEG) signals by medical experts. The task of SOZ localization reduces to classifying individual electrode recordings, representing different regions within the brain. Effective localization of the SOZ is challenging due to the absence of clinically validated biological markers and the variable-length nature of iEEG signals\textemdash consequently, surgical success rates range from 30\% to 70\% \citep{loscher2020drug, li2021neural}.

\begin{figure}[h!]
    \centering
    \includegraphics[width = 0.95 \linewidth]{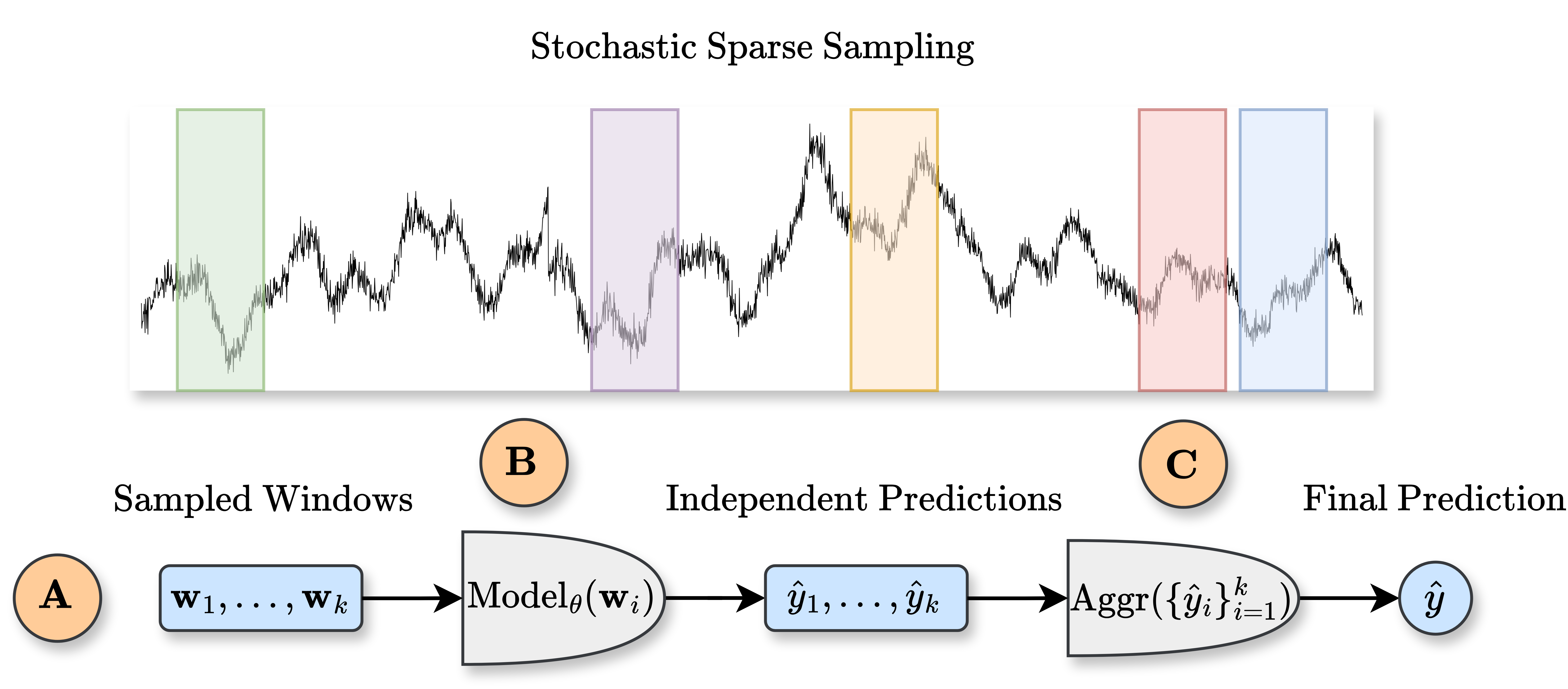}
    \caption{An overview of Stochastic Sparse Sampling (SSS) training procedure. ($\textbf{A}$) For a given time series, we sample windows of fixed-length at random throughout the signal. ($\textbf{B}$) Each window is processed independently by a local model with parameters $\theta$, outputting the local predictions $\hat{y}_1, \dots, \hat{y}_k$. ($\textbf{C}$) Local predictions are then fed through an aggregation function to form the final prediction $\hat{y}$.}
    \label{fig:SSS}
\end{figure}

\textbf{Contributions}. While our work primarily focuses on VTSC, we also evaluate our method's performance on OOD data and explore its potential for providing local explanations. To this end, we propose \textbf{S}tochastic \textbf{S}parse \textbf{S}ampling (SSS) a novel framework for VTSC developed for medical time series. The main contributions of our paper are listed as follows:
\begin{itemize}
    \item \textbf{Robustness to long and variable-length sequences}. SSS samples fixed-length windows, and processes them independently through a single model. This prevents context overload in long sequences seen in infinite-context methods, and does not utilize padding, truncation, or interpolation required by finite-context methods. By relying on a single local model, SSS utilizes far fewer parameters compared to finite-context methods that traditionally ingest the entire signal, which significantly reduces computational cost during training and the risk of overfitting over long sequences. \\
    \item \textbf{Generalization to unseen patient populations.} SSS demonstrates strong performance on out-of-distribution (OOD) data from unseen medical centers. When trained on data from one or more medical centers and evaluated on a completely new center with a different patient population, SSS outperforms all baselines in our comparisons. This result suggests SSS's potential as a foundation model for TSC, opening new avenues for research and clinical applications. \\
    \item \textbf{Explainability through local predictions.} Our method enhances model interpretability by directly tying each output—a probability score for each window—to the overall prediction. This capability is crucial in critical clinical settings, such as SOZ localization, which traditionally relies on visual analysis. Given the significant risks associated with brain region removal, any proposal should be designed to integrate within clinical workflows. Moreover, in the absence of universally recognized biological markers for epilepsy, SSS offers the potential to further our understanding the SOZ and to identify novel markers. \\
    \item \textbf{Compatibility with modern and classical backbones.} SSS integrates with any time series backbone. This ensures that our approach leverages well-established frameworks now and into the future, allowing for adaptability across a diverse array of contexts.
\end{itemize}

\section{Related Work \label{related_works}}
TSC methodologies can be broadly categorized into finite-context methods, which operate on fixed-length input segments, and infinite-context methods, which handle variable-length sequences without being restricted to a predetermined window size. For a formal treatment, please see Appendix \ref{fin_inf_methods}.




\textbf{Finite-context methods}. Finite-context methods are among the most commonly used approaches for TSC. Transformer-based models have gained significant attention, with variations such as sparse attention, series decomposition, and patching techniques \citep{vaswani2017attention, reformer, informer, autoformer, fedformer, pyraformer, patchtst, itransformer}. Several temporal convolutional networks (TCNs) have also been proposed, to capture temporal dependencies through dilated convolutions and Inception-like architectures \citep{lstnet, bai2018empirical, ismail2020inceptiontime, timesnet, moderntcn}. Recently, multilayer perceptrons (MLPs) and simple linear models have demonstrated competitive performance as well \citep{tsmixer, dlinear}. Despite the significant rise of finite-context methods, these methods are inherently limited in their ability to handle variable-length sequences, and will require the use of either padding, truncation, or interpolation for VTSC. Furthermore, as the sequence length increases, so does the number of model parameters, which leads to not only greater computational cost but an increased risk of overfitting.


\textbf{Infinite-context methods}. The recurrent neural network (RNN) family includes several models capable of ingesting variable-length time series \citep{rnn}. Long-short term memory (LSTM) networks introduce memory cells and gating mechanisms to better handle long-term dependencies \citep{lstm}. Gated recurrent units (GRUs) simplify the LSTM architecture while maintaining similar performance \citep{gru}. State space models (SSMs) have gained recent attention, with approaches such as S4 introducing structured parameterization to enable efficient computation over long sequences, while still attempting to capture long-range dependencies \citep{s4}. Building on this, Mamba introduces a selective SSM that adapts to input dynamics, further improving processing of long-range dependencies in time series data while while maintaining linear time complexity with respect to sequence length \citep{mamba}. Despite these advancements, RNNs and SSMs can still struggle with retaining information in extremely long sequences and may be prone to vanishing or exploding gradients \citep{rnn_review}. ROCKET offers an alternative, using random convolutional kernels to convert input into a fixed-length representation for VTSC, but at the cost of interpretability and potentially limited model expressivity \citep{rocket}.


\textbf{SOZ localization methods}. Several recent proposals have been tailored specifically to SOZ localization. Functional connectivity graphs compute patient-specific channel metrics to capture brain connectivity patterns \citep{grattarola2022seizure, fang2024epilepsy}, offering insights into functional relationships associated with seizures. However, their reliance on intra-patient dynamics makes them unsuitable for a single model that can generalize across multi-patient, heterogeneous datasets. Alternatively, electrical stimulation methods that use intracranial electrodes \citep{johnson2022localizing, yang2024localizing} can enhance localization accuracy through induced responses analyzed by TCNs and logistic regression models. Yet, these approaches require both fixed-length windows and the use of active stimulation. For our purpose of building a general model for SOZ localization, which can be applied to multiple patients (with a potentially varying number of channels) without electrical stimulation, we do not consider such approaches in our study.


\section{Method \label{method}}
\subsection{Variable-length time series classification \label{vtsc}}
Consider a collection of time series $X =\big\{(\bx_t^{(1)})_{t = 1}^{T_1}, \dots, (\bx_t^{(n)})_{t = 1}^{T_n}\big\}$ with labels $Y = \{y^{(1)}, \dots, y^{(n)}\}$, where each series $i$ has sequence length $T_i \in \N$, and for each time point $t$, the vector $\bx_t^{(i)} \in \R^{M_i}$ has $M_i \in \N$ channels. The goal of VTSC is to learn a classifier $f_{\theta}$ which maps each series $(\bx_t^{(i)})_{t = 1}^{T_i}$ to its corresponding class in $\{1, \dots, K\}$ for $K \in \N$ classes. We require that  $f_{\theta}$ can handle sequences of any length\textemdash that is, it has infinite context\textemdash since we assume that each $T_i$ can be arbitrarily large at inference time. Otherwise, we must adjust a finite-context classifier using padding, truncation, or interpolation.

\subsection{Stochastic sparse sampling \label{sss_method}}
\subsubsection{Sparse Training \label{sss:training}}
Figure~\ref{fig:SSS} provides an overview of SSS at train time. During each training epoch, SSS performs a sampling procedure without replacement to create each batch. Fix $L \in \N$ as the window size and let $\mW$ be the collection of all windows with size $L$ from all time series in $X$. Within any batch, a window is drawn from $\mW$, where the probability of it originating from series $i$ is set to $p_i \approx T_i / \big( \sum_{j=1}^n T_j \big)$ for every $i$. More formally, for each $i$, let $N_i$ be the random variable representing the number of windows from series $i$ in a batch of size $B$. Then $N_i \sim \text{Binomial}(B, p_i)$, and consequently $\mathbb{E}[N_i] = B p_i$. This proportional sampling ensures fair representation of each series based on its length, allowing longer sequences\textemdash which contain more information\textemdash to contribute more samples. By sampling only a subset of windows, SSS introduces sparsity into the training process, reducing computational cost found in finite-context methods, and the likelihood of context overload in infinite-context methods. Also note that by sampling with replacement, the model sees each window exactly once during a single training epoch.

After sampling a batch of windows $\mW_0 = \{ \mathbf{w}_1, \dots, \mathbf{w}_B \} $, each $ \mathbf{w}_b \in \mW_0$ is processed independently by a local model $f_{\theta}$ to obtain a local prediction $ \hat{y}_b = f_{\theta}(\mathbf{w}_b) \in [0,1]^K $, representing our probability distribution over $K \in \N$ classes. The choice of $f_{\theta}$ can be any time series backbone, in our experiments we select PatchTST \citep{patchtst}. For each time series $1 \leq i \leq n$, denote:
\begin{align} \label{series_i_windows}
    \mW_i = \{\bw \in \mW_0 \ | \ \bw \text{is from series $i$}\},
\end{align}
as the collection of windows in the batch originating from series $i$, and let:
\begin{align} \label{window_probs}
    \mY_i = \{f_{\theta}(\bw) \ | \ \bw \in \mW_i\},
\end{align}
be the set of multiset of window probabilities. To obtain the global prediction for time series $i$, we aggregate window probabilities from all examples originating from it, given by:
\begin{align} \label{aggregation}
    \hat{y}^{(i)} = \text{Aggr}(\mathcal{Y}_i) = \sum_{\hat{y} \in \mathcal{Y}_i} \alpha(\hat{y})\hat{y},
\end{align}
where each $\alpha(\hat{y}) \in [0,1]$ represents the weight of window probability $\hat{y}$ to the final output, satisfying $\sum_{\hat{y}\in \mathcal{Y}_i} \alpha(\hat{y}) = 1$; that is, $\text{Aggr}(\cdot)$ produces a convex combination over $\hat{y}_1, \dots, \hat{y}_n$. In our experiments, we use mean aggregation, i.e., $\alpha(\hat{y}) = \frac{1}{|\mathcal{Y}_i|}$ for all $\hat{y} \in \mathcal{Y}_i$, due to its simplicity and effectiveness for our current objectives. This formulation guarantees that $\hat{y}^{(i)}$ remains a valid probability distribution over $K$ classes (proof in Appendix~\ref{appendix:convexity}), and allows for potential of non-uniform aggregation functions, enabling weighting of window predictions based on factors such as prediction uncertainty, or frequency characteristics.

\begin{algorithm}[h!]
\caption{SSS Training Algorithm (Single Epoch)}\label{alg:training_loop}
\begin{algorithmic}[0]
\State \textbf{Input:} Time series $X = \big\{(\bx_t^{(1)})_{t = 1}^{T_1}, \dots, (\bx_t^{(n)})_{t = 1}^{T_n}\big\}$, labels $Y = \{y^{(1)}, \dots, y^{(n)}\}$, model $f_{\theta}$ with parameters $\theta$, batch size $B$, loss function $\mL$
\State \textbf{Output:} Updated model parameters $\theta$
\State $\mW \gets$ Set of all windows from each series in $X$
\While{$\mW \neq \emptyset$}
  \State \hspace{-0.15cm} $\rhd$ Sample $B$ windows with probability $T_i /\big(\sum_j T_j\big)$ from series $i$, for all $i$
    \State $\mW_{0} \gets$ \Call{Sample}{$\mW$, $B$} 
        \For{$i = 1, \dots, n$}
        \State $\mW_i \gets \{\bw \in \mW_0 | \bw \text{is from series $i$}\}$ \Comment{Windows from series $i$}
        \State $\mY_i \gets \{f_{\theta}(\bw)| \bw \in \mW_i\}$ \Comment{Window probabilities for series $i$}
        \State $\hat{y}^{(i)} \gets \Call{Aggregate}{\mY_i}$ \Comment{Final probability for series $i$}
        \EndFor
    \State $\mathcal{L}_{\text{batch}} \gets \frac{1}{n} \sum_{i=1}^{n} \mathcal{L}(\hat{y}^{(i)}, y^{(i)})$ \Comment{Loss over the batch}
    \State $\theta \gets \Call{Update}{\theta, \mathcal{L}_{\text{batch}}}$ \Comment{Update local model parameters}
    \State $\mW \gets \mW \bc \mW_0$ \Comment{Remove sampled windows from the pool}
\EndWhile
\State \Return $\theta$
\end{algorithmic}
\end{algorithm}

\subsubsection{Inference \label{sss:inference}}
To the derive the prediction for a time series $(\bx_t^{(i)})_{t=1}^{T_i}$ at inference time, we utilize all windows from the selected time series, to form its final prediction $\hat{y}^{(i)}$. Let $\mW_i$ be the collection of all windows from series $i$. We pass each window through the local model to obtain the multiset of window probabilities $\mY_i$ as shown in Equation (\ref{window_probs}). Before the aggregation step, we utilize a calibrator $g_{\phi}: [0,1]^{K} \to [0,1]^{K}$, which adjusts each individual window probability to reduce the presence of noise, and define:
\begin{align} \label{calibrated_probs}
    \tilde{\mY}_i = \{g_{\phi}(\hat{y}) \ | \ \hat{y} \in \mY_i\},
\end{align}
which is then fed into the final prediction during the aggregation step $\hat{y}^{(i)} = \text{Aggr}(\tilde{\mY}_i)$. By calibrating the window probabilities before aggregation, we correct for biases or misestimations in the predicted probabilities, which occur when the output probabilities of the local models do not accurately reflect the true likelihood of the event. We consider isotonic regression and Venn-Abers predictors for our calibration method. It is important to note, that these calibration techniques do not alter underlying structure of $f_{\theta}$ and do not utilize input features from the time series; rather, they adjust the output probabilities to mitigate the effect of noise during the aggregation step. For more information regarding calibration methods see Appendix~\ref{appendix:calibration}.

\section{Experiments \label{experiments}}
\subsection{Baselines}
For our finite-context baselines, we include a variety of modern time series backbones including PatchTST, which uses subwindows as tokens in combination with the traditional Transformer architecture \citep{patchtst, vaswani2017attention}. TimesNet is a TCN architecture that models both interperiod and intraperiod dynamics by leveraging Fast Fourier Transform (FFT) features to slice the signal into multiple views, which are then processed through inception blocks \citep{timesnet, ismail2020inceptiontime}. ModernTCN, is a recently proposed TCN which decouples temporal and channel information processing by using separate DWConv and ConvFFN modules for more efficient representation learning \citep{moderntcn}. DLinear is linear neural network, which has shown to outperform several modern Transformer-based architecutres, utilizing traditional seasonal-trend decomposition techniques \citep{dlinear}. In our infinite-context baselines, we utilize ROCKET, which applies randomly initialized, fixed convolutional kernels to the input sequence. ROCKET compresses the resulting convolutional outputs to the maximum value and the proportion of positive values (PPV), where these features then fed to a linear classifier \citep{rocket}. We also consider GRUs and an LSTM network, both of which are popular RNN frameworks designed to capture long-term dependencies in sequential data \citep{gru, lstm}. Additionally, we use the recent SSM architecture, Mamba, which applies selective state updates for efficient long-range dependency modeling \citep{mamba}. Further details regarding configurations and hyperparameter tuning for each baseline can be found in Appendix~\ref{config}.

\subsection{Dataset}
The Epilepsy iEEG Multicenter Dataset\footnote[1]{https://openneuro.org/datasets/ds003029/versions/1.0.7} consists of iEEG signals with SOZ clinical annotations from four medical centers including the Johns Hopkins Hospital (JHH), the National Institute of Health (NIH), University of Maryland Medical Center (UMMC), and University of Miami Jackson Memorial Hospital (UMH). Since UMH contained only a single patient with clinical SOZ annotations, we did not consider it in our main evaluations; however, we did use UMH within the multicenter evaluation in \ref{table:soz-loc} and the training set for OOD experiments for SOZ localization on unseen medical centers in Table~\ref{table:ood-soz-loc}. We select the F1 score, Area Under the Receiver Operator Curve (AUC), and accuracy for our evaluation metrics. For summary statistics and information on the dataset see Appendix~\ref{appendix:dataset}.

\subsection{Univariate VTSC}
The objective of SOZ localization for each patient-specific iEEG recording is to precisely identify the channels or electrodes that belong to the seizure onset zone. This effectively reduces the task to univariate TSC. While several channel-dependent methods have been proposed for SOZ localization (see section~\ref{related_works}), we focus primarily on channel-independent solutions for two key reasons: (1) they are more resilient to interpatient variability and are unaffected by factors like channel count and therefore can be applied in multiple hospital settings, and (2) they generalize better to domains beyond electrophysiological data, as they learn local signal characteristics rather than explicitly modeling functional connectivity between electrode sites, which may not be present in other medical time series.

\begin{table}[h]
\centering
\caption{SOZ localization. F1 score, AUC, and Accuracy are reported for each medical center, averaged over $5$ seeds. For each center, we train and evaluate a separate model; the first column represents training and evaluation on all centers. \textbf{Bolded} values with $^*$ and $^\dagger$ denote the best and second-best results, respectively.}
\label{table:soz-loc}
\setlength{\tabcolsep}{2pt} 
\begin{tabular}{@{}l@{\hspace{3pt}}ccc@{\hspace{3pt}}ccc@{\hspace{3pt}}ccc@{\hspace{3pt}}ccc@{}}
\toprule
\scriptsize & \multicolumn{3}{c}{\scriptsize All} & \multicolumn{3}{c}{\scriptsize JHH} & \multicolumn{3}{c}{\scriptsize NIH} & \multicolumn{3}{c}{\scriptsize UMMC} \\ [-2pt]
\cmidrule(lr){2-4} \cmidrule(lr){5-7} \cmidrule(lr){8-10} \cmidrule(lr){11-13}
\scriptsize Model & \scriptsize F1 & \scriptsize AUC & \scriptsize Acc. & \scriptsize F1 & \scriptsize AUC & \scriptsize Acc. & \scriptsize F1 & \scriptsize AUC & \scriptsize Acc. & \scriptsize F1 & \scriptsize AUC & \scriptsize Acc. \\
\midrule

\scriptsize \textbf{SSS (Ours)}  & \scriptsize $\textbf{0.7629}^*$ & \scriptsize $\textbf{0.7999}^*$ & \scriptsize $\textbf{72.35}^*$ & \scriptsize $\textbf{0.8187}^*$ & \scriptsize $\textbf{0.8851}^*$ & \scriptsize $\textbf{81.37}^*$ & \scriptsize $\textbf{0.6716}^*$ & \scriptsize 0.6853 & \scriptsize $\textbf{64.22}^\dagger$ & \scriptsize $\textbf{0.7978}^\dagger$ & \scriptsize $\textbf{0.8279}^\dagger$ & \scriptsize $\text{76.06}^{ \ \ }$ \\

\scriptsize PatchTST \citep{patchtst} & \scriptsize $\textbf{0.7097}^{ \dagger }$ & \scriptsize $\textbf{0.7852}^\dagger$ & \scriptsize $\text{66.83}^{ \ \ }$ & \scriptsize $\textbf{0.7419}^{ \dagger }$ & \scriptsize $\textbf{0.8045}^\dagger$ & \scriptsize $\text{71.82}^{ \ \ }$ & \scriptsize $\text{0.6402}^{ \ \ }$  & \scriptsize $\textbf{0.7036}^\dagger$ & \scriptsize $\text{62.11}^{ \ \ }$ & \scriptsize $\textbf{0.8015}^*$ & \scriptsize $\text{0.8121}^{ \ \ }$ & \scriptsize $\text{77.58}^{ \ \ }$ \\

\scriptsize TimesNet \citep{timesnet} & \scriptsize $\text{0.6897}^{ \ \ }$  & \scriptsize  $\text{0.7174}^{ \ \ }$ & \scriptsize $\text{65.98}^{ \ \ }$ & \scriptsize $\text{0.6891}^{ \ \ }$ & \scriptsize  $\text{0.8029}^{ \ \ }$ & \scriptsize $\textbf{73.64}^{ \dagger}$ & \scriptsize $\text{0.5950}^{ \ \ }$  & \scriptsize  $\text{0.6806}^{ \ \ }$ & \scriptsize $\textbf{66.00}^*$ & \scriptsize $\text{0.7821}^{ \ \ }$  & \scriptsize $\text{0.8099}^{ \ \ }$ & \scriptsize $\textbf{77.06}^\dagger$ \\

\scriptsize ModernTCN \citep{moderntcn} & \scriptsize $\text{0.6938}^{ \ \ }$ & \scriptsize $\text{0.7305}^{ \ \ }$ & \scriptsize $\text{68.42}^{ \ \ }$ & \scriptsize $\text{0.6710}^{ \ \ }$ & \scriptsize  $\text{0.7508}^{ \ \ }$ & \scriptsize $\text{67.73}^{ \ \ }$ & \scriptsize $\text{0.5055}^{ \ \ }$ & \scriptsize  $\textbf{0.7220}^*$ & \scriptsize $\text{64.00}^{ \ \ }$ & \scriptsize $\text{0.6371}^{ \ \ }$   & \scriptsize $\text{0.8203}^{ \ \ }$ & \scriptsize $\text{71.76}^{ \ \ }$\\

\scriptsize DLinear \citep{dlinear} & \scriptsize $\text{0.6916}^{ \ \ }$  & \scriptsize $\text{0.7044}^{ \ \ }$ & \scriptsize $\text{68.41}^{ \ \ }$  & \scriptsize $\text{0.6873}^{ \ \ }$ & \scriptsize $\text{0.7395}^{ \ \ }$ & \scriptsize $\text{66.36}^{ \ \ }$ & \scriptsize $\text{0.6055}^{ \ \ }$  & \scriptsize $\text{0.6405}^{ \ \ }$ & \scriptsize $\text{59.50}^{ \ \ }$ & \scriptsize $\text{0.7658}^{ \ \ }$ & \scriptsize  $\text{0.7729}^{ \ \ }$ & \scriptsize $\text{77.05}^{ \ \ }$ \\

\scriptsize ROCKET \citep{rocket} & \scriptsize $\text{0.6847}^{ \ \ }$ & \scriptsize $\text{0.7481}^{ \ \ }$ & \scriptsize $\text{69.27}^{ \ \ }$ & \scriptsize $\text{0.6753}^{ \ \ }$ & \scriptsize $\text{0.7752}^{ \ \ }$ & \scriptsize $\text{69.09}^{ \ \ }$ & \scriptsize $\textbf{0.6520}^\dagger$ & \scriptsize $\text{0.6546}^{ \ \ }$ & \scriptsize $\text{62.63}^{ \ \ }$ & \scriptsize $\text{0.7686}^{ \ \ }$ & \scriptsize $\text{0.7900}^{ \ \ }$ & \scriptsize $\text{74.55}^{ \ \ }$ \\


\scriptsize Mamba \citep{mamba} & \scriptsize $\text{0.6452}^{ \ \ }$ & \scriptsize $\text{0.7134}^{ \ \ }$ & \scriptsize $\text{64.39}^{ \ \ }$ & \scriptsize $\text{0.6456}^{ \ \ }$ & \scriptsize $\text{0.6764}^{ \ \ }$ & \scriptsize $\text{62.27}^{ \ \ }$ & \scriptsize $\text{0.5974}^{ \ \ }$ & \scriptsize $\text{0.6050}^{ \ \ }$ & \scriptsize $\text{58.95}^{ \ \ }$ & \scriptsize $\text{0.7900}^{ \ \ }$ & \scriptsize $\textbf{0.8424}^*$ & \scriptsize $\text{76.36}^{ \ \ }$ \\

\scriptsize GRUs \citep{gru} & \scriptsize $\text{0.6948}^{ \ \ }$ & \scriptsize $\text{0.7340}^{ \ \ }$ & \scriptsize $\text{65.85}^{ \ \ }$ & \scriptsize $\text{0.6140}^{ \ \ }$ & \scriptsize $\text{0.6959}^{ \ \ }$ & \scriptsize $\text{63.18}^{ \ \ }$ & \scriptsize $\text{0.6171}^{ \ \ }$ & \scriptsize $\text{0.6283}^{ \ \ }$ & \scriptsize $\text{62.63}^{ \ \ }$ & \scriptsize $\text{0.7920}^{ \ \ }$ & \scriptsize $\text{0.8211}^{ \ \ }$ & \scriptsize $\textbf{77.27}^*$ \\

\scriptsize LSTM \citep{lstm} & \scriptsize $\text{0.6709}^{ \ \ }$ & \scriptsize $\text{0.7144}^{ \ \ }$ & \scriptsize $\text{65.43}^{ \ \ }$ & \scriptsize $\text{0.6571}^{ \ \ }$ & \scriptsize $\text{0.6190}^{ \ \ }$ & \scriptsize $\text{59.09}^{ \ \ }$ & \scriptsize $\text{0.5657}^{ \ \ }$ & \scriptsize $\text{0.5909}^{ \ \ }$ & \scriptsize $\text{54.74}^{ \ \ }$ & \scriptsize $\text{0.7604}^{ \ \ }$ & \scriptsize $\text{0.8060}^{ \ \ }$ & \scriptsize $\text{73.64}^{ \ \ }$ \\

\bottomrule
\end{tabular}
\end{table}

Table ~\ref{table:soz-loc} summarizes our experimental results for SOZ localization on each individual medical center, along with training and evaluation on all medical centers. Within the multicenter evaluation, SSS outperforms all baselines for each evaluation metric. SSS also shows strong performance for the JHH and NIH centers, with comparable results in the UMMC center. We attribute this difference in performance for UMMC due to the fact that it is the only center where the sampling frequency of patient recordings can differ between patients (250-1000 Hz), whereas JHH and NIH both have sampling frequencies of 1000 Hz. We also observe that in general, finite-context perform better on the chosen evaluation metrics in comparison to infinite-context methods, for in-distribution univariate VTSC.

\subsection{Out-of-Distribution VTSC}
Table~\ref{table:ood-soz-loc} reports our results for SOZ localization in the OOD setting. At train time, from the collection of four medical center datasets, we omit one and train on the remaining three. At inference time, we test solely on the omitted medical center to gauge how well each method performs OOD. For iEEG signals from epilepsy patients, inter-patient variability can be significant due to differences in placement of electrodes in the brain, and the inherent heterogeneity of epileptogenic networks across individuals. Thus, even among medical time series, this can be one of the most challenging tasks to perform OOD. SSS outperforms each baseline on each unseen medical center, often by a considerable margin when compared to finite-context methods.

\begin{table}[h]
\centering
\caption{Out-of-Distribution SOZ localization. F1 score, AUC, and Accuracy are reported for unseen medical centers, averaged over $5$ seeds. For each center, we train on all other centers and evaluate on the selected center. \textbf{Bolded} values with $^*$ and $^\dagger$ denote the best and second-best results, respectively.}
\label{table:ood-soz-loc}
\setlength{\tabcolsep}{2pt} 
\begin{tabular}{@{}l@{\hspace{3pt}}ccc@{\hspace{3pt}}ccc@{\hspace{3pt}}ccc@{}}
\toprule
\scriptsize & \multicolumn{3}{c}{\scriptsize JHH} & \multicolumn{3}{c}{\scriptsize NIH} & \multicolumn{3}{c}{\scriptsize UMMC} \\ [-2pt]
\cmidrule(lr){2-4} \cmidrule(lr){5-7} \cmidrule(lr){8-10}
\scriptsize Model & \scriptsize F1 & \scriptsize AUC & \scriptsize Acc. & \scriptsize F1 & \scriptsize AUC & \scriptsize Acc. & \scriptsize F1 & \scriptsize AUC & \scriptsize Acc. \\
\midrule

\scriptsize \textbf{SSS (Ours)}  & \scriptsize $\textbf{0.6981}^*$ & \scriptsize $\textbf{0.6590}^*$ & \scriptsize $\textbf{57.80}^*$ & \scriptsize $\textbf{0.6492}^*$ & \scriptsize $\textbf{0.6092}^*$ & \scriptsize $\textbf{54.73}^\dagger$ & \scriptsize $\textbf{0.7243}^*$ & \scriptsize $\textbf{0.8048}^*$ & \scriptsize $\textbf{72.42}^*$ \\

\scriptsize PatchTST \citep{patchtst} & \scriptsize $\text{0.6175}^{ \ \ }$ & \scriptsize $\text{0.5267}^{ \ \ }$ & \scriptsize $\text{50.46}^{ \ \ }$ & \scriptsize $\text{0.5986}^{ \ \ }$ & \scriptsize $\text{0.4829}^{ \ \ }$ & \scriptsize $\text{48.17}^{ \ \ }$ & \scriptsize $\text{0.5067}^{ \ \ }$ & \scriptsize $\text{0.5274}^{ \ \ }$ & \scriptsize $\text{57.63}^{ \ \ }$ \\

\scriptsize TimesNet \citep{timesnet} & \scriptsize $\text{0.5261}^{ \ \ }$ & \scriptsize  $\text{0.4501}^{ \ \ }$ & \scriptsize $\text{47.00}^{ \ \ }$ & \scriptsize $\text{0.4461}^{ \ \ }$  & \scriptsize  $\text{0.4407}^{ \ \ }$ & \scriptsize $\text{45.85}^{ \ \ }$ & \scriptsize $\text{0.3177}^{ \ \ }$  & \scriptsize $\text{0.3108}^{ \ \ }$ & \scriptsize $\text{46.14}^{ \ \ }$ \\

\scriptsize ModernTCN \citep{moderntcn} & \scriptsize $\text{0.4934}^{ \ \ }$ & \scriptsize  $\text{0.4970}^{ \ \ }$ & \scriptsize $\text{49.54}^{ \ \ }$ & \scriptsize $\text{0.4019}^{ \ \ }$ & \scriptsize  $\text{0.4651}^{ \ \ }$ & \scriptsize $\text{48.71}^{ \ \ }$ & \scriptsize $\text{0.3804}^{ \ \ }$   & \scriptsize $\text{0.4474}^{ \ \ }$ & \scriptsize $\text{50.55}^{ \ \ }$\\

\scriptsize DLinear \citep{dlinear} & \scriptsize $\text{0.4205}^{ \ \ }$ & \scriptsize $\text{0.4775}^{ \ \ }$ & \scriptsize $\text{47.25}^{ \ \ }$ & \scriptsize $\text{0.5090}^{ \ \ }$  & \scriptsize $\text{0.4945}^{ \ \ }$ & \scriptsize $\text{50.54}^{ \ \ }$ & \scriptsize $\text{0.5602}^{ \ \ }$ & \scriptsize  $\text{0.5236}^{ \ \ }$ & \scriptsize $\text{56.00}^{ \ \ }$ \\

\scriptsize ROCKET \citep{rocket} & \scriptsize $\text{0.5784}^{ \ \ }$ & \scriptsize $\text{0.5777}^{ \ \ }$ & \scriptsize $\textbf{56.71}^{ \dagger }$ & \scriptsize $\text{0.5051}^{ \ \ }$ & \scriptsize $\text{0.5522}^{ \ \ }$ & \scriptsize $\text{52.91}^{ \ \ }$ & \scriptsize $\text{0.5608}^{ \ \ }$ & \scriptsize $\text{0.5941}^{ \ \ }$ & \scriptsize $\text{58.36}^{ \ \ }$ \\


\scriptsize Mamba \citep{mamba} & \scriptsize $\text{0.5790}^{ \ \ }$ & \scriptsize $\textbf{0.5835}^{ \dagger }$ & \scriptsize $\text{55.68}^{ \ \ }$ & \scriptsize $\textbf{0.6183}^{ \dagger }$ & \scriptsize $\textbf{0.5767}^{ \dagger }$ & \scriptsize $\textbf{55.69}^{ * }$ & \scriptsize $\text{0.5715}^{ \ \ }$ & \scriptsize $\text{0.5953}^{ \ \ }$ & \scriptsize $\text{55.76}^{ \ \ }$ \\

\scriptsize GRUs \citep{gru} & \scriptsize $\text{0.5779}^{ \ \ }$ & \scriptsize $\text{0.4868}^{ \ \ }$ & \scriptsize $\text{48.80}^{ \ \ }$ & \scriptsize $\text{0.5824}^{ \ \ }$ & \scriptsize $\text{0.5588}^{ \ \ }$ & \scriptsize $\text{53.66}^{ \ \ }$ & \scriptsize $\textbf{0.6689}^{ \dagger }$ & \scriptsize $\textbf{0.7645}^{ \dagger }$ & \scriptsize $\textbf{69.30}^{ \dagger }$ \\

\scriptsize LSTM \citep{lstm} & \scriptsize $\textbf{0.6362}^{ \dagger }$ & \scriptsize $\text{0.5165}^{ \ \ }$ & \scriptsize $\text{50.92}^{ \ \ }$ & \scriptsize $\text{0.5774}^{ \ \ }$ & \scriptsize $\text{0.5678}^{ \ \ }$ & \scriptsize $\text{53.87}^{ \ \ }$ & \scriptsize $\text{0.6581}^{ \ \ }$ & \scriptsize $\text{0.6616}^{ \ \ }$ & \scriptsize $\text{62.36}^{ \ \ }$ \\

\bottomrule
\end{tabular}
\end{table}
In comparison to Table~\ref{table:soz-loc}, we observe the opposite trend between finite- and infinite-context methods, whereas infinite-context methods seem to perform better OOD. In contrast, SSS demonstrates robust performance both in  distribution (where finite-context methods excel) and OOD (where infinite-context methods excel).

\begin{figure}[h!]
    \centering
    
    \begin{subfigure}[b]{0.49\textwidth}
        \centering
        \includegraphics[width=1.01\textwidth]{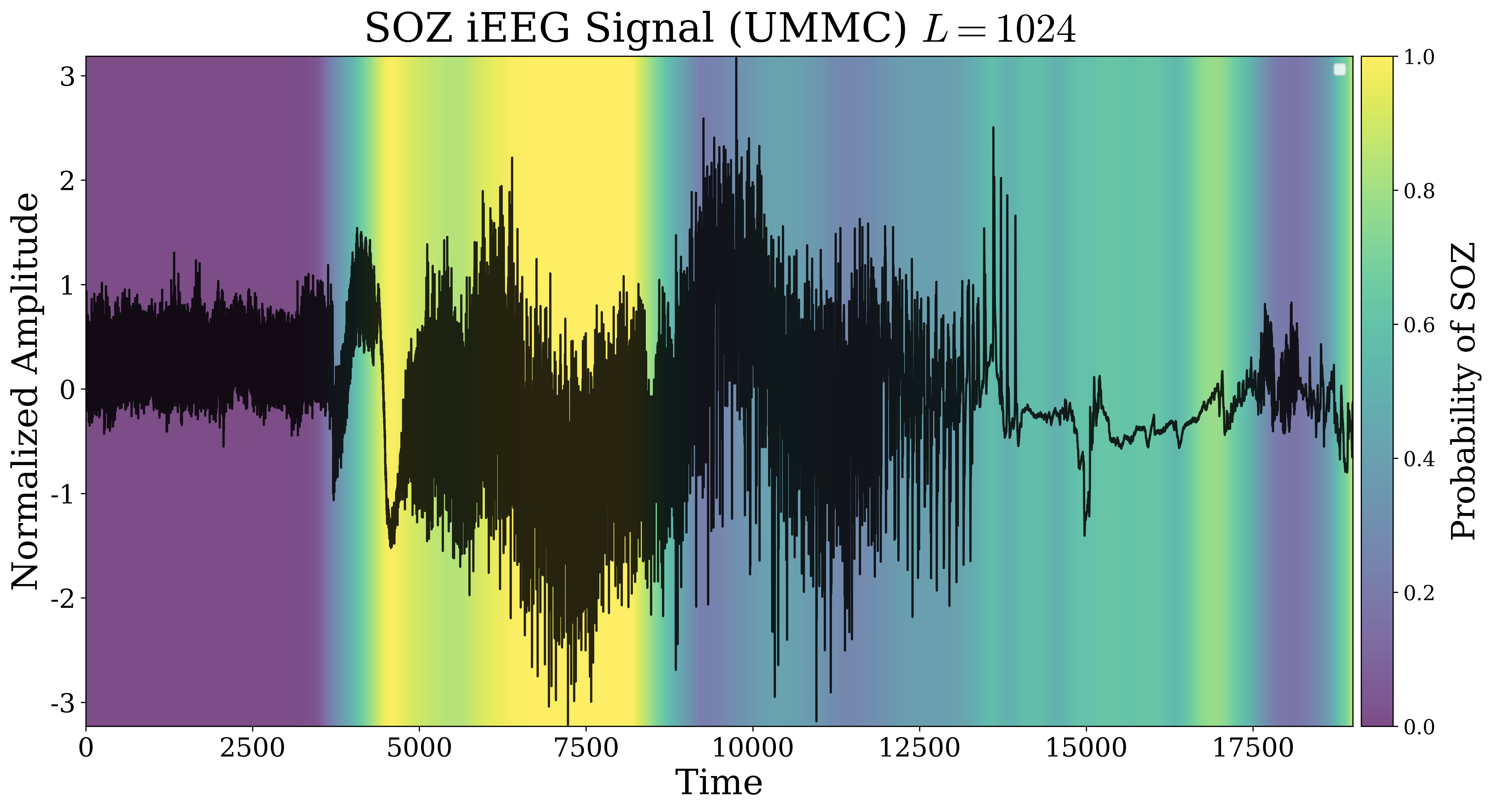} 
    \end{subfigure}
    \hfill 
    \begin{subfigure}[b]{0.49\textwidth}
        \centering
        \includegraphics[width=1.01\textwidth]{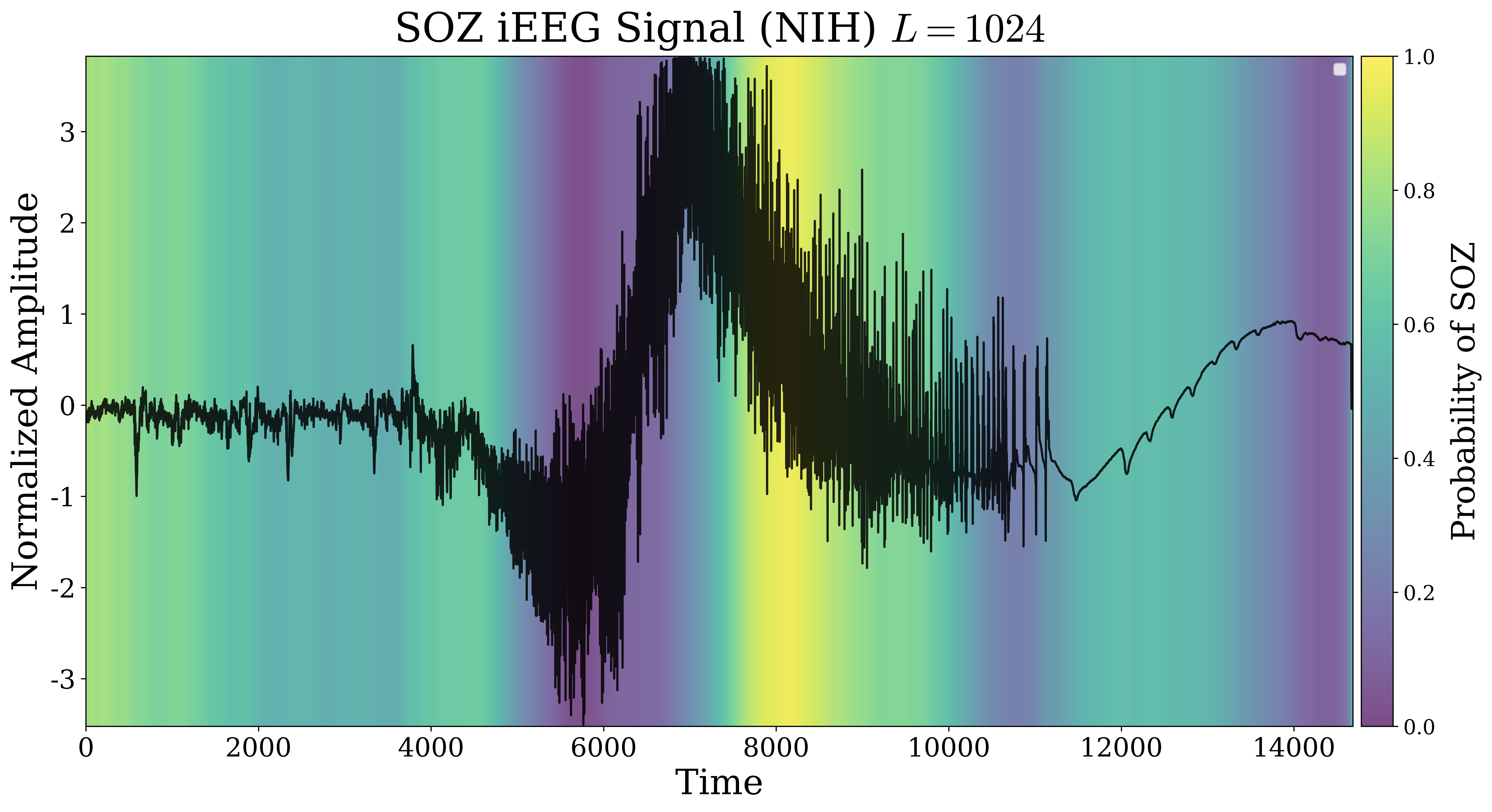} 
    \end{subfigure}

    \caption{Visualization of SSS window probabilities throughout iEEG channels at inference time, using the PatchTST backbone with window size $1024$. The heatmap represents locally averaged window probabilities over time, with color intensity being proportional to the likelihood of the channel belonging to the SOZ.} 
    \label{fig:heatmap}
\end{figure}

\begin{figure}[h]
    \centering
    
    \begin{subfigure}[b]{0.49\textwidth}
        \centering
        \includegraphics[width=1.01\textwidth]{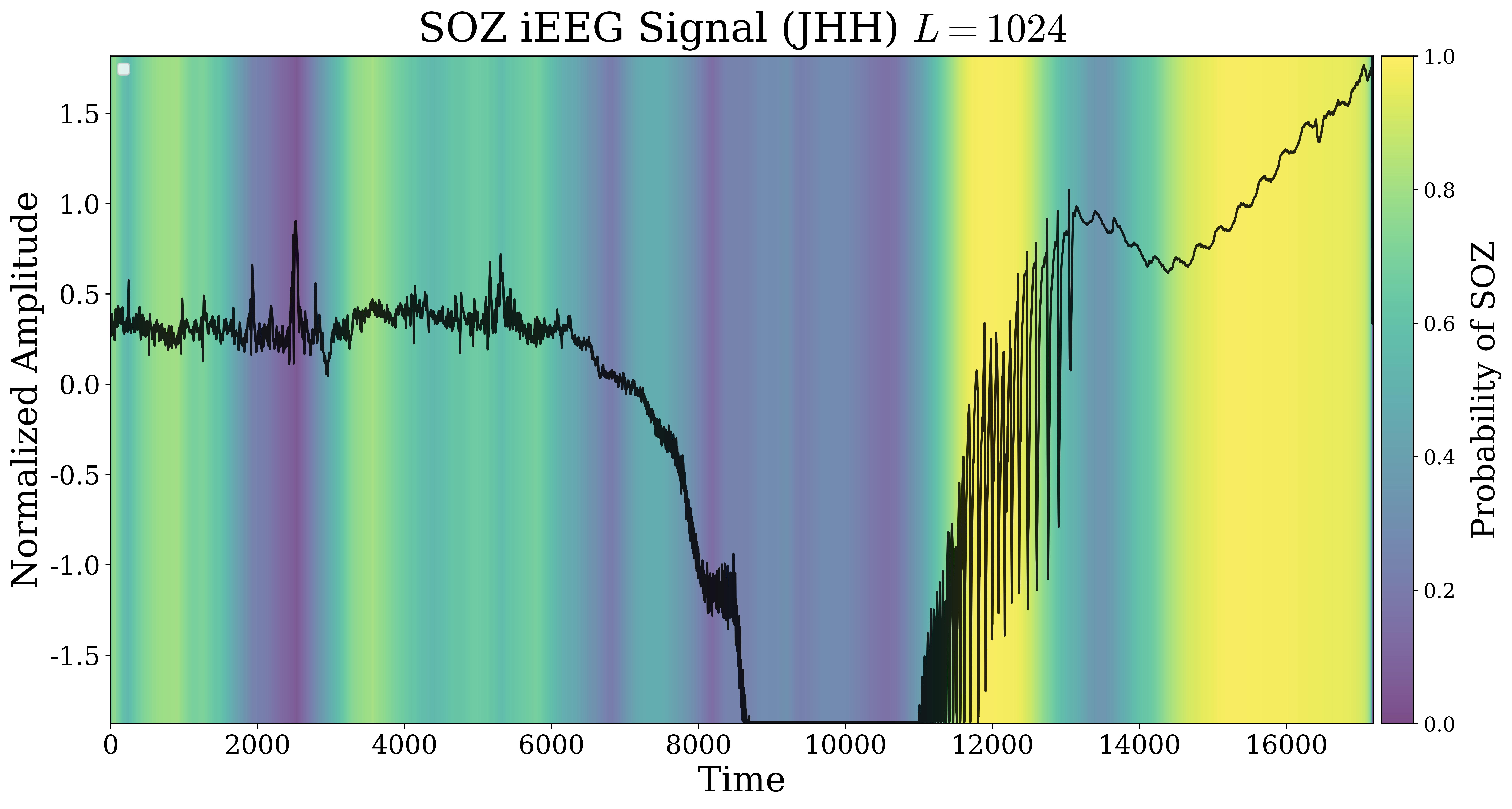} 
    \end{subfigure}
    \hfill 
    \begin{subfigure}[b]{0.49\textwidth}
        \centering
        \includegraphics[width=1.01\textwidth]{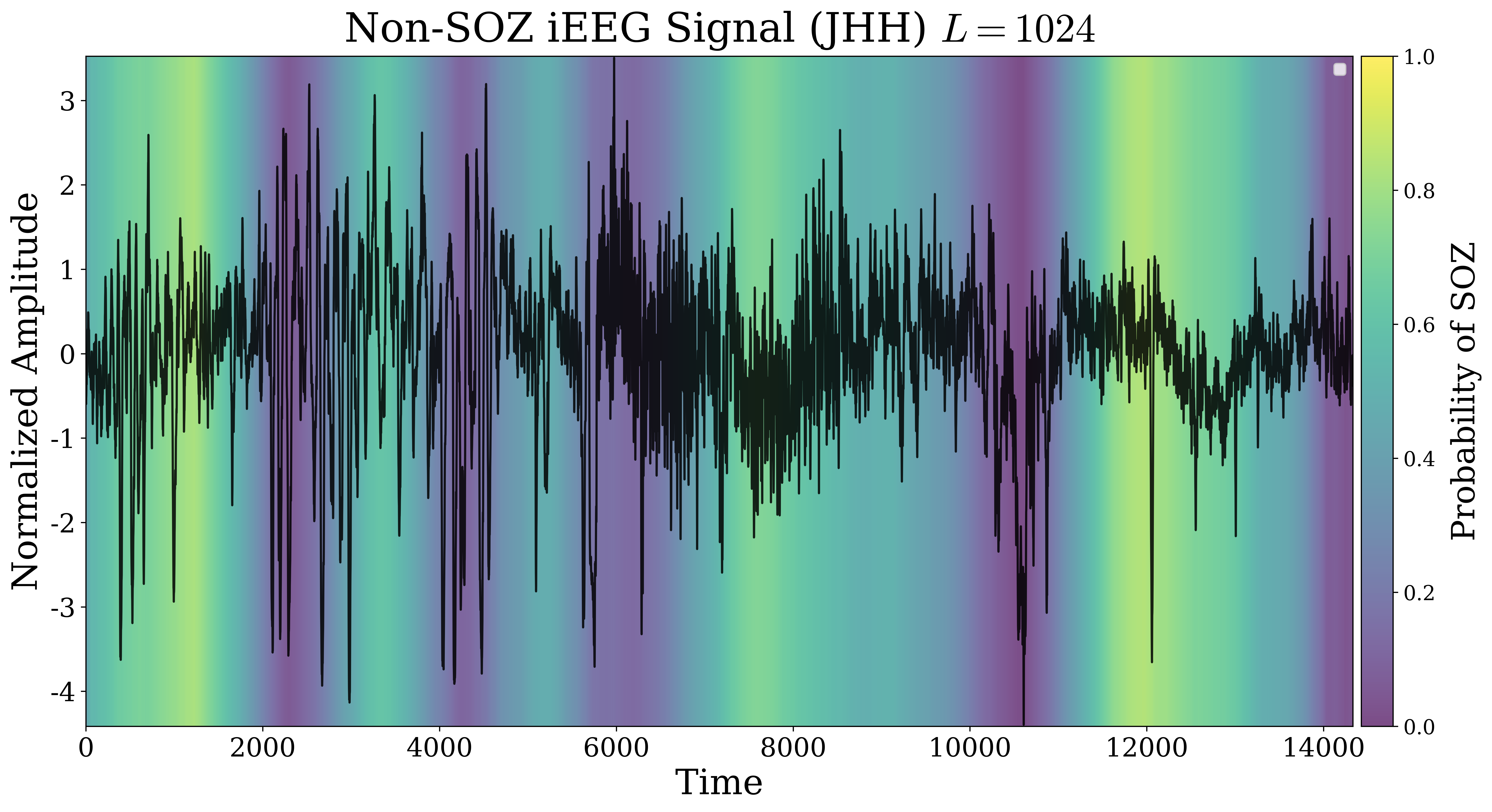} 
    \end{subfigure}

    \vspace{-0.35cm}

    \begin{subfigure}[b]{0.49\textwidth}
        \centering
        \includegraphics[width=1.01\textwidth]{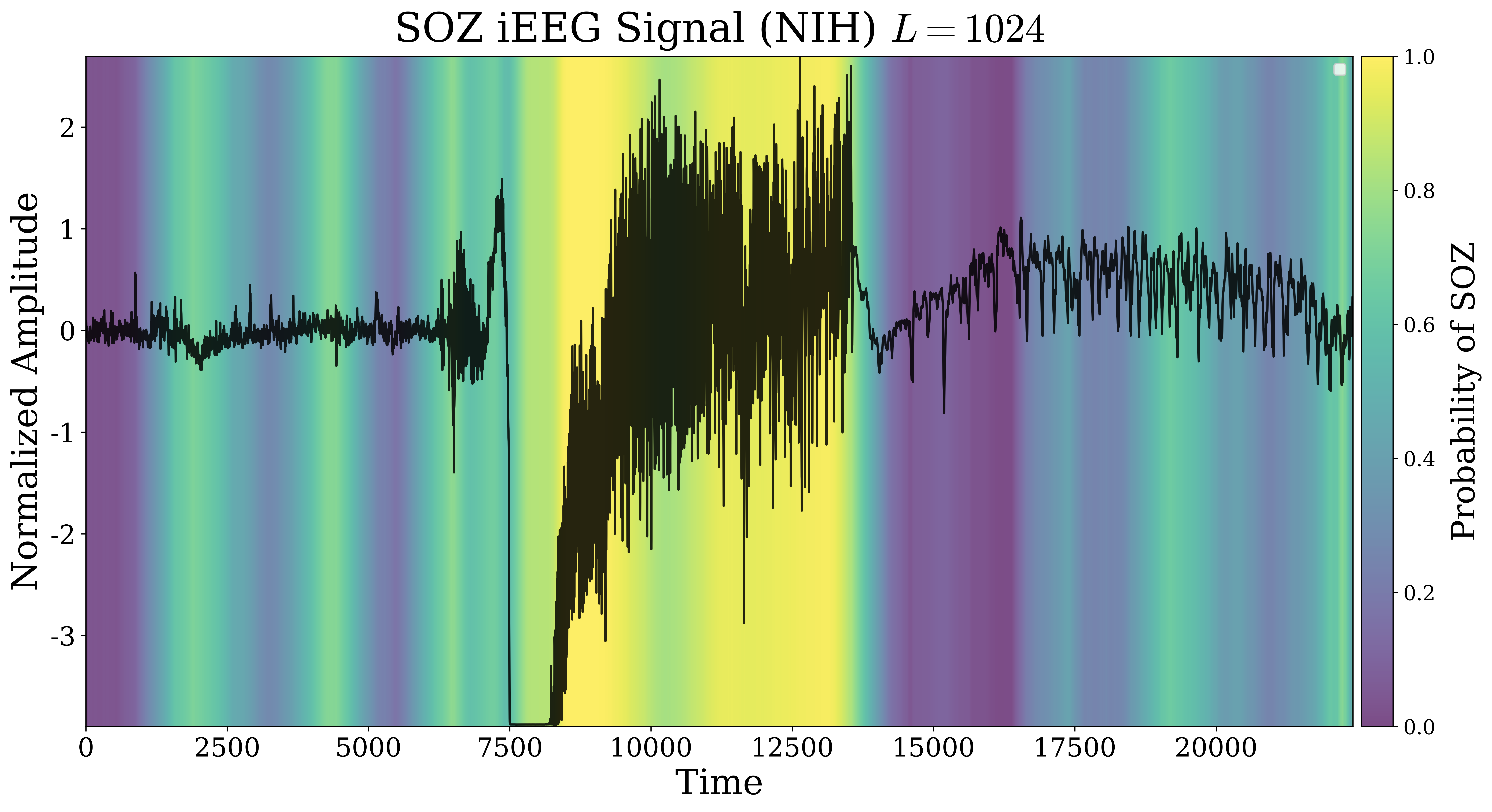} 
    \end{subfigure}
    \hfill 
    \begin{subfigure}[b]{0.49\textwidth}
        \centering
        \includegraphics[width=1.01\textwidth]{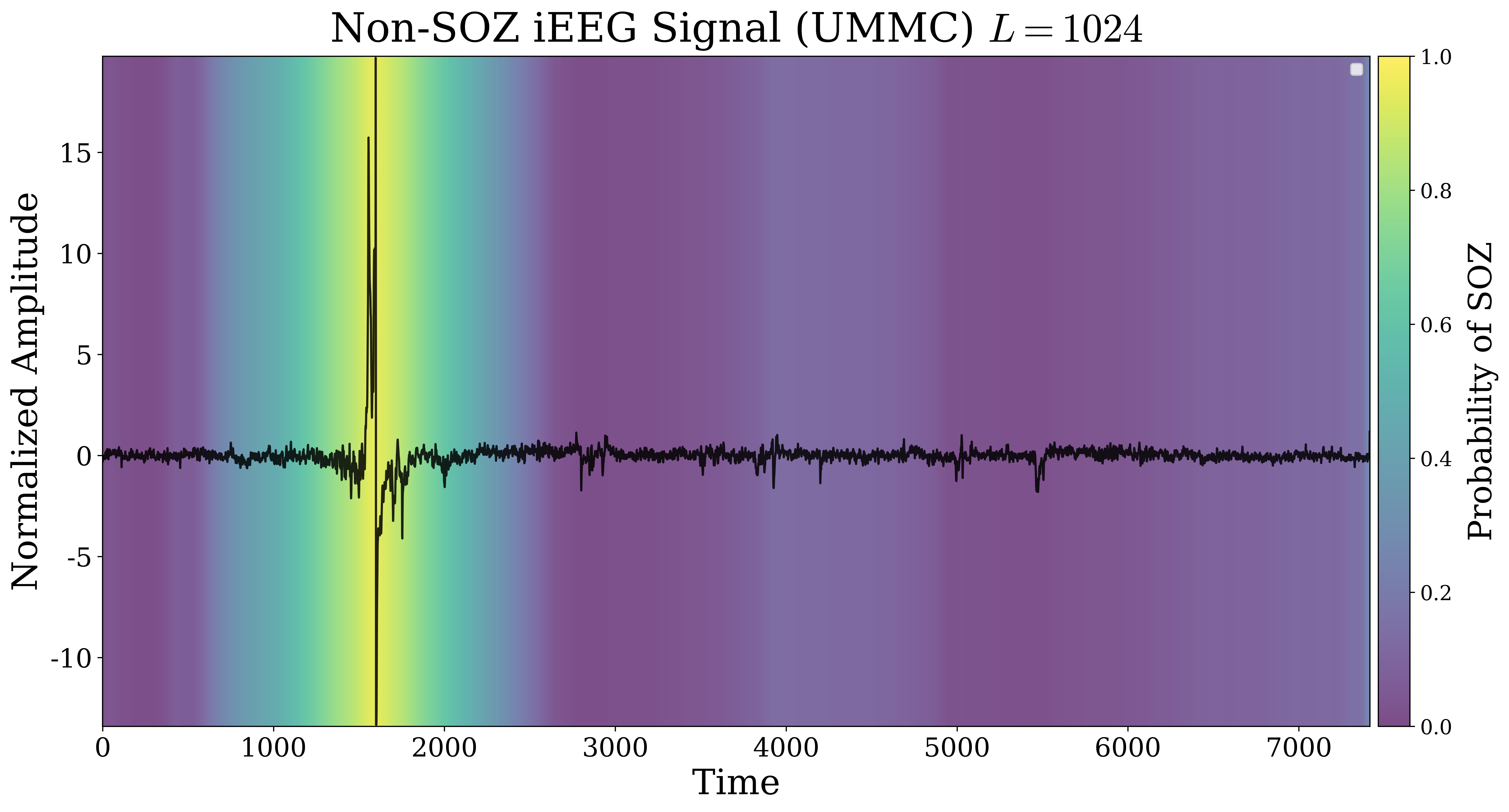} 
    \end{subfigure}
    \caption{Visualization of SSS window probabilities for OOD iEEG channels at inference time, using the PatchTST backbone with window size $1024$. The heatmap represents locally averaged window probabilities over time, with color intensity being proportional to the likelihood of the channel belonging to the SOZ.} 
    \label{fig:ood-heatmap}
\end{figure}

\section{Discussion \label{discussion}}
\subsection{Review of Results}
In general, we observe that SSS outperforms modern finite-context and infinite-context methods both in-distribution and OOD for univariate VTSC. SOZ localization presents a significant challenge due to intra-patient and inter-patient variability, and with our evaluation the collection of 3 heterogeneous datasets, serving rigorous testbed for assessing the generalizability of SSS's capabilities for learning local signal characteristics in medical time series. While our experiments focus on univariate VTSC, as outlined in section~\ref{sss:training} and \ref{sss:inference}, SSS can be easily applied to the multivariate setting.

Our results from the multicenter evaluation Table~\ref{table:soz-loc} indicate that SSS benefits from a diversity of data distributions and volume of training examples, given by the magnitude in performance differences, when compared to single cluster results. Furthermore, our OOD experiments from Table~\ref{table:ood-soz-loc} suggest that SSS may be learning local signal characteristics present in different patient populations, leading to a advantage over finite-context and infinite-context methods. Our visualizations of SSS's predictions OOD in Figure~\ref{fig:ood-heatmap} supports this notion, as there exist clear qualitative differences in locally averaged window probabilities with respect to anomalous signal characteristics, such as spikes or increases in amplitude or frequency. Figure~\ref{fig:heatmap} shows SSS's predictions in-distribution which also suggest a form of implicit semantic segmentation for anomalous local regions of the signal with respect to the SOZ probability. More analysis is needed to further solidify our understanding, which would benefit from a rigorous explainability study in future works.

\subsection{Conclusion}
To conclude, this work introduces novel VTSC framework, Stochastic Sparse Sampling (SSS), specifically tailored for medical time series applications. SSS blends the best of both worlds between finite-context methods (enabling usage of finite-context backbones) while allowing sampling of the entire signal in a computationally efficient manner that is less prone to context overload from infinite-context methods. SSS learns local signal characteristics, which provides the added benefit of inherent interpretability, and provides superior performance to the SOTA in-distritbuion and OOD for unseen medical centers. For future work, it would be valuable to: (1) benchmark SSS across a wider variety of variable-length medical time series, (2) provide further rigorous post-hoc insights into the window probability distribution given by SSS, and (3) potentially include uncertainty estimates within the aggregation function to highlight anomalous regions of the signal.


\newpage

{\small \bibliography{main}}

\begin{thebibliography}{47}
\providecommand{\natexlab}[1]{#1}
\providecommand{\url}[1]{\texttt{#1}}
\expandafter\ifx\csname urlstyle\endcsname\relax
  \providecommand{\doi}[1]{doi: #1}\else
  \providecommand{\doi}{doi: \begingroup \urlstyle{rm}\Url}\fi

\bibitem[Agliari et~al.(2020)Agliari, Barra, Barra, Fachechi, Franceschi~Vento, and Moretti]{agliari2020detecting}
Elena Agliari, Adriano Barra, Orazio~Antonio Barra, Alberto Fachechi, Lorenzo Franceschi~Vento, and Luciano Moretti.
\newblock Detecting cardiac pathologies via machine learning on heart-rate variability time series and related markers.
\newblock \emph{Scientific Reports}, 10\penalty0 (1):\penalty0 8845, 2020.

\bibitem[Amann et~al.(2020)Amann, Blasimme, Vayena, Frey, Madai, and Consortium]{amann2020explainability}
Julia Amann, Alessandro Blasimme, Effy Vayena, Dietmar Frey, Vince~I Madai, and Precise4Q Consortium.
\newblock Explainability for artificial intelligence in healthcare: a multidisciplinary perspective.
\newblock \emph{BMC medical informatics and decision making}, 20:\penalty0 1--9, 2020.

\bibitem[Angelopoulos \& Bates(2021)Angelopoulos and Bates]{cp}
Anastasios~N Angelopoulos and Stephen Bates.
\newblock A gentle introduction to conformal prediction and distribution-free uncertainty quantification.
\newblock \emph{arXiv preprint arXiv:2107.07511}, 2021.

\bibitem[Bahdanau et~al.(2014)Bahdanau, Cho, and Bengio]{gru}
Dzmitry Bahdanau, Kyunghyun Cho, and Yoshua Bengio.
\newblock Neural machine translation by jointly learning to align and translate.
\newblock \emph{arXiv preprint arXiv:1409.0473}, 2014.

\bibitem[Bai et~al.(2018)Bai, Kolter, and Koltun]{bai2018empirical}
Shaojie Bai, J~Zico Kolter, and Vladlen Koltun.
\newblock An empirical evaluation of generic convolutional and recurrent networks for sequence modeling.
\newblock \emph{arXiv preprint arXiv:1803.01271}, 2018.

\bibitem[Balaji \& Parhi(2022)Balaji and Parhi]{balaji2022seizure}
Sai~Sanjay Balaji and Keshab~K Parhi.
\newblock Seizure onset zone identification from ieeg: A review.
\newblock \emph{IEEE Access}, 10:\penalty0 62535--62547, 2022.

\bibitem[Chen et~al.(2023)Chen, Li, Yoder, Arik, and Pfister]{tsmixer}
Si-An Chen, Chun-Liang Li, Nate Yoder, Sercan~O Arik, and Tomas Pfister.
\newblock Tsmixer: An all-mlp architecture for time series forecasting.
\newblock \emph{arXiv preprint arXiv:2303.06053}, 2023.

\bibitem[Crabb{\'e} \& Van Der~Schaar(2021)Crabb{\'e} and Van Der~Schaar]{crabbe2021explaining}
Jonathan Crabb{\'e} and Mihaela Van Der~Schaar.
\newblock Explaining time series predictions with dynamic masks.
\newblock In \emph{International Conference on Machine Learning}, pp.\  2166--2177. PMLR, 2021.

\bibitem[Dempster et~al.(2020)Dempster, Petitjean, and Webb]{rocket}
Angus Dempster, Fran{\c{c}}ois Petitjean, and Geoffrey~I Webb.
\newblock Rocket: exceptionally fast and accurate time series classification using random convolutional kernels.
\newblock \emph{Data Mining and Knowledge Discovery}, 34\penalty0 (5):\penalty0 1454--1495, 2020.

\bibitem[Deutsch et~al.(1994)Deutsch, Lehmann, Carson, Roudsari, Hopkins, and S{\"o}nksen]{deutsch1994time}
T~Deutsch, ED~Lehmann, ER~Carson, AV~Roudsari, KD~Hopkins, and PH~S{\"o}nksen.
\newblock Time series analysis and control of blood glucose levels in diabetic patients.
\newblock \emph{Computer Methods and Programs in Biomedicine}, 41\penalty0 (3-4):\penalty0 167--182, 1994.

\bibitem[Early et~al.(2024)Early, Cheung, Cutajar, Xie, Kandola, and Twomey]{millet}
Joseph Early, Gavin~KC Cheung, Kurt Cutajar, Hanting Xie, Jas Kandola, and Niall Twomey.
\newblock Inherently interpretable time series classification via multiple instance learning.
\newblock In \emph{The Twelfth International Conference on Learning Representations}, 2024.

\bibitem[Fang et~al.(2024)Fang, Li, Na, Jiang, He, Wei, Huang, and Zhou]{fang2024epilepsy}
Chunying Fang, Xingyu Li, Meng Na, Wenhao Jiang, Yuankun He, Aowei Wei, Jie Huang, and Ming Zhou.
\newblock Epilepsy lesion localization method based on brain function network.
\newblock \emph{Frontiers in Human Neuroscience}, 18:\penalty0 1431153, 2024.

\bibitem[Grattarola et~al.(2022)Grattarola, Livi, Alippi, Wennberg, and Valiante]{grattarola2022seizure}
Daniele Grattarola, Lorenzo Livi, Cesare Alippi, Richard Wennberg, and Taufik~A Valiante.
\newblock Seizure localisation with attention-based graph neural networks.
\newblock \emph{Expert systems with applications}, 203:\penalty0 117330, 2022.

\bibitem[Gu \& Dao(2023)Gu and Dao]{mamba}
Albert Gu and Tri Dao.
\newblock Mamba: Linear-time sequence modeling with selective state spaces.
\newblock \emph{arXiv preprint arXiv:2312.00752}, 2023.

\bibitem[Gu et~al.(2021)Gu, Goel, and R{\'e}]{s4}
Albert Gu, Karan Goel, and Christopher R{\'e}.
\newblock Efficiently modeling long sequences with structured state spaces.
\newblock \emph{arXiv preprint arXiv:2111.00396}, 2021.

\bibitem[Hochreiter \& Schmidhuber(1997)Hochreiter and Schmidhuber]{lstm}
Sepp Hochreiter and J{\"u}rgen Schmidhuber.
\newblock Long short-term memory.
\newblock \emph{Neural computation}, 9\penalty0 (8):\penalty0 1735--1780, 1997.

\bibitem[Huang et~al.(2024)Huang, Chen, B{\"a}ck, and van Stein]{huang2024shapelet}
Qi~Huang, Wei Chen, Thomas B{\"a}ck, and Niki van Stein.
\newblock Shapelet-based model-agnostic counterfactual local explanations for time series classification.
\newblock \emph{arXiv preprint arXiv:2402.01343}, 2024.

\bibitem[Ismail~Fawaz et~al.(2019)Ismail~Fawaz, Forestier, Weber, Idoumghar, and Muller]{tscsurvey1}
Hassan Ismail~Fawaz, Germain Forestier, Jonathan Weber, Lhassane Idoumghar, and Pierre-Alain Muller.
\newblock Deep learning for time series classification: a review.
\newblock \emph{Data mining and knowledge discovery}, 33\penalty0 (4):\penalty0 917--963, 2019.

\bibitem[Ismail~Fawaz et~al.(2020)Ismail~Fawaz, Lucas, Forestier, Pelletier, Schmidt, Weber, Webb, Idoumghar, Muller, and Petitjean]{ismail2020inceptiontime}
Hassan Ismail~Fawaz, Benjamin Lucas, Germain Forestier, Charlotte Pelletier, Daniel~F Schmidt, Jonathan Weber, Geoffrey~I Webb, Lhassane Idoumghar, Pierre-Alain Muller, and Fran{\c{c}}ois Petitjean.
\newblock Inceptiontime: Finding alexnet for time series classification.
\newblock \emph{Data Mining and Knowledge Discovery}, 34\penalty0 (6):\penalty0 1936--1962, 2020.

\bibitem[Johnson et~al.(2022)Johnson, Cai, Doss, Jiang, Negi, Narasimhan, Paulo, Gonz{\'a}lez, Roberson, Bick, et~al.]{johnson2022localizing}
Graham~W Johnson, Leon~Y Cai, Derek~J Doss, Jasmine~W Jiang, Aarushi~S Negi, Saramati Narasimhan, Danika~L Paulo, Hern{\'a}n~FJ Gonz{\'a}lez, Shawniqua~Williams Roberson, Sarah~K Bick, et~al.
\newblock Localizing seizure onset zones in surgical epilepsy with neurostimulation deep learning.
\newblock \emph{Journal of neurosurgery}, 138\penalty0 (4):\penalty0 1002--1007, 2022.

\bibitem[Kim et~al.(2021)Kim, Kim, Tae, Park, Choi, and Choo]{revin}
Taesung Kim, Jinhee Kim, Yunwon Tae, Cheonbok Park, Jang-Ho Choi, and Jaegul Choo.
\newblock Reversible instance normalization for accurate time-series forecasting against distribution shift.
\newblock In \emph{International Conference on Learning Representations}, 2021.

\bibitem[Kingma \& Ba(2014)Kingma and Ba]{adam}
Diederik~P Kingma and Jimmy Ba.
\newblock Adam: A method for stochastic optimization.
\newblock \emph{arXiv preprint arXiv:1412.6980}, 2014.

\bibitem[Kitaev et~al.(2020)Kitaev, Kaiser, and Levskaya]{reformer}
Nikita Kitaev, Lukasz Kaiser, and Anselm Levskaya.
\newblock Reformer: The efficient transformer.
\newblock In \emph{International Conference on Learning Representations}, 2020.
\newblock URL \url{https://openreview.net/forum?id=rkgNKkHtvB}.

\bibitem[Lai et~al.(2018)Lai, Chang, Yang, and Liu]{lstnet}
Guokun Lai, Wei-Cheng Chang, Yiming Yang, and Hanxiao Liu.
\newblock Modeling long-and short-term temporal patterns with deep neural networks.
\newblock In \emph{The 41st international ACM SIGIR conference on research \& development in information retrieval}, pp.\  95--104, 2018.

\bibitem[Li et~al.(2021)Li, Huynh, Fitzgerald, Cajigas, Brusko, Jagid, Claudio, Kanner, Hopp, Chen, et~al.]{li2021neural}
Adam Li, Chester Huynh, Zachary Fitzgerald, Iahn Cajigas, Damian Brusko, Jonathan Jagid, Angel~O Claudio, Andres~M Kanner, Jennifer Hopp, Stephanie Chen, et~al.
\newblock Neural fragility as an eeg marker of the seizure onset zone.
\newblock \emph{Nature neuroscience}, 24\penalty0 (10):\penalty0 1465--1474, 2021.

\bibitem[Liu et~al.(2022)Liu, Yu, Liao, Li, Lin, Liu, and Dustdar]{pyraformer}
Shizhan Liu, Hang Yu, Cong Liao, Jianguo Li, Weiyao Lin, Alex~X Liu, and Schahram Dustdar.
\newblock Pyraformer: Low-complexity pyramidal attention for long-range time series modeling and forecasting.
\newblock In \emph{International Conference on Learning Representations}. PMLR, 2022.

\bibitem[Liu et~al.(2024)Liu, Hu, Zhang, Wu, Wang, Ma, and Long]{itransformer}
Yong Liu, Tengge Hu, Haoran Zhang, Haixu Wu, Shiyu Wang, Lintao Ma, and Mingsheng Long.
\newblock itransformer: Inverted transformers are effective for time series forecasting.
\newblock In \emph{International Conference on Learning Representations}, 2024.

\bibitem[L{\"o}scher et~al.(2020)L{\"o}scher, Potschka, Sisodiya, and Vezzani]{loscher2020drug}
Wolfgang L{\"o}scher, Heidrun Potschka, Sanjay~M Sisodiya, and Annamaria Vezzani.
\newblock Drug resistance in epilepsy: clinical impact, potential mechanisms, and new innovative treatment options.
\newblock \emph{Pharmacological reviews}, 72\penalty0 (3):\penalty0 606--638, 2020.

\bibitem[Luo \& Wang(2024)Luo and Wang]{moderntcn}
Donghao Luo and Xue Wang.
\newblock Moderntcn: A modern pure convolution structure for general time series analysis.
\newblock In \emph{The Twelfth International Conference on Learning Representations}, 2024.

\bibitem[Mohammadi~Foumani et~al.(2024)Mohammadi~Foumani, Miller, Tan, Webb, Forestier, and Salehi]{tscsurvey2}
Navid Mohammadi~Foumani, Lynn Miller, Chang~Wei Tan, Geoffrey~I Webb, Germain Forestier, and Mahsa Salehi.
\newblock Deep learning for time series classification and extrinsic regression: A current survey.
\newblock \emph{ACM Computing Surveys}, 56\penalty0 (9):\penalty0 1--45, 2024.

\bibitem[Nie et~al.(2023)Nie, H.~Nguyen, Sinthong, and Kalagnanam]{patchtst}
Yuqi Nie, Nam H.~Nguyen, Phanwadee Sinthong, and Jayant Kalagnanam.
\newblock A time series is worth 64 words: Long-term forecasting with transformers.
\newblock In \emph{International Conference on Learning Representations}, 2023.

\bibitem[Organization et~al.(2019)]{world2019epilepsy}
World~Health Organization et~al.
\newblock \emph{Epilepsy: a public health imperative}.
\newblock World Health Organization, 2019.

\bibitem[Paszke et~al.(2019)Paszke, Gross, Massa, Lerer, Bradbury, Chanan, Killeen, Lin, Gimelshein, Antiga, et~al.]{pytorch}
Adam Paszke, Sam Gross, Francisco Massa, Adam Lerer, James Bradbury, Gregory Chanan, Trevor Killeen, Zeming Lin, Natalia Gimelshein, Luca Antiga, et~al.
\newblock Pytorch: An imperative style, high-performance deep learning library.
\newblock \emph{Advances in neural information processing systems}, 32, 2019.

\bibitem[Rajpurkar et~al.(2022)Rajpurkar, Chen, Banerjee, and Topol]{rajpurkar2022ai}
Pranav Rajpurkar, Emma Chen, Oishi Banerjee, and Eric~J Topol.
\newblock Ai in health and medicine.
\newblock \emph{Nature medicine}, 28\penalty0 (1):\penalty0 31--38, 2022.

\bibitem[Rumelhart et~al.(1986)Rumelhart, Hinton, and Williams]{rnn}
David~E Rumelhart, Geoffrey~E Hinton, and Ronald~J Williams.
\newblock Learning internal representations by error propagation, parallel distributed processing, explorations in the microstructure of cognition, ed. de rumelhart and j. mcclelland. vol. 1. 1986.
\newblock \emph{Biometrika}, 71\penalty0 (599-607):\penalty0 6, 1986.

\bibitem[Salehinejad et~al.(2017)Salehinejad, Sankar, Barfett, Colak, and Valaee]{rnn_review}
Hojjat Salehinejad, Sharan Sankar, Joseph Barfett, Errol Colak, and Shahrokh Valaee.
\newblock Recent advances in recurrent neural networks.
\newblock \emph{arXiv preprint arXiv:1801.01078}, 2017.

\bibitem[Silvapulle \& Sen(2011)Silvapulle and Sen]{isotonic}
Mervyn~J Silvapulle and Pranab~Kumar Sen.
\newblock \emph{Constrained statistical inference: Order, inequality, and shape constraints}.
\newblock John Wiley \& Sons, 2011.

\bibitem[Stafstrom \& Carmant(2015)Stafstrom and Carmant]{stafstrom2015seizures}
Carl~E Stafstrom and Lionel Carmant.
\newblock Seizures and epilepsy: an overview for neuroscientists.
\newblock \emph{Cold Spring Harbor perspectives in medicine}, 5\penalty0 (6):\penalty0 a022426, 2015.

\bibitem[Vaswani et~al.(2017)Vaswani, Shazeer, Parmar, Uszkoreit, Jones, Gomez, Kaiser, and Polosukhin]{vaswani2017attention}
Ashish Vaswani, Noam Shazeer, Niki Parmar, Jakob Uszkoreit, Llion Jones, Aidan~N Gomez, {\L}ukasz Kaiser, and Illia Polosukhin.
\newblock Attention is all you need.
\newblock \emph{Advances in neural information processing systems}, 30, 2017.

\bibitem[Vovk \& Petej(2014)Vovk and Petej]{venn_abers}
Vladimir Vovk and Ivan Petej.
\newblock Venn-abers predictors.
\newblock In \emph{Proceedings of the Thirtieth Conference on Uncertainty in Artificial Intelligence}, pp.\  829--838, 2014.

\bibitem[Walther et~al.(2023)Walther, Viehweg, Haueisen, and M{\"a}der]{walther2023systematic}
Dominik Walther, Johannes Viehweg, Jens Haueisen, and Patrick M{\"a}der.
\newblock A systematic comparison of deep learning methods for eeg time series analysis.
\newblock \emph{Frontiers in Neuroinformatics}, 17:\penalty0 1067095, 2023.

\bibitem[Wu et~al.(2021)Wu, Xu, Wang, and Long]{autoformer}
Haixu Wu, Jiehui Xu, Jianmin Wang, and Mingsheng Long.
\newblock Autoformer: Decomposition transformers with auto-correlation for long-term series forecasting.
\newblock \emph{Advances in Neural Information Processing Systems}, 34:\penalty0 22419--22430, 2021.

\bibitem[Wu et~al.(2022)Wu, Hu, Liu, Zhou, Wang, and Long]{timesnet}
Haixu Wu, Tengge Hu, Yong Liu, Hang Zhou, Jianmin Wang, and Mingsheng Long.
\newblock Timesnet: Temporal 2d-variation modeling for general time series analysis.
\newblock \emph{arXiv preprint arXiv:2210.02186}, 2022.

\bibitem[Yang et~al.(2024)Yang, Zhao, Li, Mo, Guo, Li, Yao, Fan, Cai, Sang, et~al.]{yang2024localizing}
Bowen Yang, Baotian Zhao, Chao Li, Jiajie Mo, Zhihao Guo, Zilin Li, Yuan Yao, Xiuliang Fan, Du~Cai, Lin Sang, et~al.
\newblock Localizing seizure onset zone by a cortico-cortical evoked potentials-based machine learning approach in focal epilepsy.
\newblock \emph{Clinical Neurophysiology}, 158:\penalty0 103--113, 2024.

\bibitem[Zeng et~al.(2023)Zeng, Chen, Zhang, and Xu]{dlinear}
Ailing Zeng, Muxi Chen, Lei Zhang, and Qiang Xu.
\newblock Are transformers effective for time series forecasting?
\newblock 2023.

\bibitem[Zhou et~al.(2021)Zhou, Zhang, Peng, Zhang, Li, Xiong, and Zhang]{informer}
Haoyi Zhou, Shanghang Zhang, Jieqi Peng, Shuai Zhang, Jianxin Li, Hui Xiong, and Wancai Zhang.
\newblock Informer: Beyond efficient transformer for long sequence time-series forecasting.
\newblock In \emph{Proceedings of the AAAI conference on artificial intelligence}, volume~35, pp.\  11106--11115, 2021.

\bibitem[Zhou et~al.(2022)Zhou, Ma, Wen, Wang, Sun, and Jin]{fedformer}
Tian Zhou, Ziqing Ma, Qingsong Wen, Xue Wang, Liang Sun, and Rong Jin.
\newblock {FEDformer}: Frequency enhanced decomposed transformer for long-term series forecasting.
\newblock In \emph{International Conference on Machine Learning}, 2022.

\end{thebibliography}

\bibliographystyle{iclr2025_conference}

\newpage
\appendix
\renewcommand{\thesection}{\Alph{section}}
\setcounter{section}{0}  
\section{Stochastic Sparse Sampling}
\subsection{Sampling Implementation}
Let  $X =\big\{(\bx_t^{(1)})_{t = 1}^{T_1}, \dots, (\bx_t^{(n)})_{t = 1}^{T_n}\big\}$ be the collection of variable-length time series. To achieve the sampling procedure outlined in section \ref{sss_method}, we first construct the set all windows $\mW$ by performing the slicing window method over each individual time series in $X$. For a window size $L$, with window stride $S$, and a time series $i$ with sequence length $T_i$, we obtain $A_i= \lfloor \frac{T_i - L}{S} \rfloor + 1$ windows. Note that $A_i \propto T_i$ for each $i$. During training, we convert $\mW$ to a PyTorch dataset and use the native PyTorch dataloader with batch size $B$. When a batch is sampled, windows are drawn uniformly from $\mW$, and thus for each $i$, the probability of observing a window from series $i$ is $p_i =A_i/(\sum_{j = 1}^nA_j)  \approx T_i/(\sum_{j = 1}^n T_j)$. If $N_i$ represents the number of windows in the batch from series $i$, then we achieve the desired sampling property of $N_i \sim \text{Binomial}(B, p_i)$ where $p_i \approx T_i/(\sum_{j = 1}^n T_j)$. Note that this procedure uses sampling \textit{without} replacement; one may consider replacement, however, we did not experiment with this and leave modifications with more complex sampling procedures as a future direction.

\subsection{Ablations}


\subsubsection{Batch Size}
\begin{wraptable}{r}{0.5\textwidth}
\centering
\scriptsize
\captionsetup{font=small} 
\caption{Performance of SSS over various batch sizes.}
\label{tab:batch-size}
\begin{tabular}{@{}lccc@{}}
\toprule
\textbf{Batch Size} & \textbf{F1 Score} & \textbf{AUC} & \textbf{Acc. (\%)} \\
\midrule
128   & 0.7563 $\pm$ 0.02717 & 0.8139 $\pm$ 0.04302 & 70.79 $\pm$ 2.323 \\
512   & 0.7498 $\pm$ 0.02354 & 0.7965 $\pm$ 0.03526 & 69.46 $\pm$ 3.109 \\
2048  & 0.7651 $\pm$ 0.03568 & 0.8194 $\pm$ 0.05166 & 71.40 $\pm$ 6.078 \\
4096  & 0.7441 $\pm$ 0.02987 & 0.7979 $\pm$ 0.06818 & 68.09 $\pm$ 4.848 \\
8192  & 0.7629 $\pm$ 0.02829 & 0.7999 $\pm$ 0.05331 & 72.35 $\pm$ 4.965 \\
\bottomrule
\end{tabular}
\end{wraptable}

Table \ref{tab:batch-size} reports mean F1 score, AUC, and accuracy (\%) with standard deviations, are reported over 5 seeds, for the evaluation on all medical centers. Each experiment uses the best configuration described in Table~\ref{table:sss-config}. While the aggregation function in Equation (\ref{aggregation}) may benefit from a higher number of samples within the batch (due to mean approximation), we observe that the performance of SSS remains relatively constant across various batch sizes, and thus does not require large batch sizes to achieve adequate performance.

\subsubsection{Window Size}
\begin{wraptable}{r}{0.5\textwidth}
\centering
\scriptsize
\captionsetup{font=small} 
\caption{Performance of SSS over various window sizes $L$.}
\label{tab:seq-len}
\begin{tabular}{@{}lccc@{}}
\toprule
\textbf{$L$} & \textbf{F1 Score} & \textbf{AUC} & \textbf{Acc. (\%)} \\
\midrule
512  & 0.7567 $\pm$ 0.01075 & 0.8141 $\pm$ 0.03054 & 70.62 $\pm$ 2.165 \\
1024 & 0.7629 $\pm$ 0.02829 & 0.7999 $\pm$ 0.05331 & 72.35 $\pm$ 4.965 \\
2048 & 0.7334 $\pm$ 0.03003 & 0.7719 $\pm$ 0.04762 & 68.52 $\pm$ 3.353 \\
\bottomrule
\end{tabular}
\end{wraptable}

Table \ref{tab:seq-len} follows the same experimental setup as Table \ref{tab:batch-size}, but varies over the window size $L$ instead of batch size. While the performance of $L = 512$ and $L = 1024$ remain relatively on par, we notice that the F1 score drops significantly for $L = 2048$ along with all other metrics. This suggests that a large receptive field may not be advantageous, and that SSS benefits from processing localized areas of the signal. Indeed, as $L$ increases we expect to reach a similar performance to the finite-context PatchTST baseline, with a decrease in performance as a result.

\subsubsection{Calibration}
Table \ref{tab:calibration-methods} follows the same experimental setup as Table \ref{tab:batch-size}, where we explore different calibration methods. We observe that while the performance of Venn-Abers Predictors and isotonic regression remain relatively close among all evaluation metrics, removing calibration makes a significant impact on model performance, suggesting that our calibration likely reduces noise among window predictions, and as a result, enhances the aggregation step.

\begin{wraptable}{h!}{0.5\textwidth}
\centering
\scriptsize
\caption{Performance of SSS with different calibration methods.}
\label{tab:calibration-methods}
\begin{tabular}{@{}lccc@{}}
\toprule
\textbf{Calibration Method} & \textbf{F1 Score} & \textbf{AUC} & \textbf{Acc. (\%)} \\
\midrule
Isotonic Regression & 0.7629 $\pm$ 0.02829 & 0.7999 $\pm$ 0.05331 & 72.35 $\pm$ 4.965 \\
Venn-ABERS         & 0.7637 $\pm$ 0.02704 & 0.8003 $\pm$ 0.05308 & 72.47 $\pm$ 4.773 \\
No Calibration     & 0.7291 $\pm$ 0.06909 & 0.7830 $\pm$ 0.03844 & 69.93 $\pm$ 4.635 \\
\bottomrule
\end{tabular}
\end{wraptable}

\subsection{Convex Aggregation \label{appendix:convexity}}
\begin{theorem}[Probability Distribution Guarantee]
    Fix $K, n \in \N$. Suppose $\alpha_1, \dots, \alpha_n \geq 0$ satisfies $\sum_{i = 1}^n \alpha_i = 1$, and $\bv_1, \dots, \bv_n \in [0,1]^{K}$ each satisfy $\sum_{j = 1}^K v_{ik} =1$ for $1 \leq i \leq n$. That is, each $\bv_i = (v_{i1}, v_{i2}, \dots, v_{iK})^T$ represents a valid discrete probability distribution over $K$ classes. Then the convex combination:
    \begin{align}
        \by = \sum_{i = 1}^n \alpha_i \bv_i,
    \end{align}
    also represents a valid discrete probability distribution, satisfying $\sum_{j = 1}^K y_{j} = 1$.
\end{theorem}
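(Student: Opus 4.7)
The plan is to verify directly that $\by$ satisfies both defining properties of a discrete probability distribution on $K$ classes, namely that each coordinate lies in $[0,1]$ and that the coordinates sum to $1$. The strategy is simply to write $\by$ out coordinate-wise as $y_j = \sum_{i=1}^n \alpha_i v_{ij}$ and then manipulate finite sums; the statement is essentially a restatement of the fact that the probability simplex is closed under convex combinations, so no genuine obstacle is anticipated.

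First I would establish the coordinate bounds. Nonnegativity $y_j \geq 0$ is immediate, since $\alpha_i \geq 0$ and $v_{ij} \geq 0$ by hypothesis, so each $y_j$ is a sum of nonnegative terms. For the upper bound, using $v_{ij} \leq 1$ together with $\sum_{i} \alpha_i = 1$ gives $y_j = \sum_{i=1}^n \alpha_i v_{ij} \leq \sum_{i=1}^n \alpha_i = 1$, so $y_j \in [0,1]$ as required.

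Next I would interchange the order of the (finite) double sum to verify the total mass condition:
\begin{align*}
\sum_{j=1}^K y_j = \sum_{j=1}^K \sum_{i=1}^n \alpha_i v_{ij} = \sum_{i=1}^n \alpha_i \sum_{j=1}^K v_{ij} = \sum_{i=1}^n \alpha_i \cdot 1 = 1,
\end{align*}
applying $\sum_{j=1}^K v_{ij} = 1$ in the penultimate step and $\sum_{i=1}^n \alpha_i = 1$ at the end. The Fubini-style swap is harmless here because both sums are finite, so no integrability or convergence issues arise. The entire argument is elementary algebraic bookkeeping with no substantive step, so I do not expect any real difficulty.
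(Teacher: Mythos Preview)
Your proposal is correct and essentially identical to the paper's proof: both write $y_j = \sum_i \alpha_i v_{ij}$, observe nonnegativity from $\alpha_i, v_{ij} \geq 0$, and swap the finite double sum to get $\sum_j y_j = \sum_i \alpha_i \sum_j v_{ij} = \sum_i \alpha_i = 1$. The only cosmetic difference is that you explicitly verify $y_j \leq 1$, whereas the paper leaves that implicit (it follows anyway from $y_j \geq 0$ and $\sum_j y_j = 1$).
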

\begin{proof}
By construction, $y_j = \sum_{i = 1}^n \alpha_i v_{ij}$ for each entry $1 \leq j \leq K$. Then $y_j \geq 0$, since for each $1 \leq i \leq n$ and $1 \leq j \leq K$ we are given that $\alpha_i \geq 0$ and $v_{ij} \geq 0$. Furthermore, we can write:
\begin{align*}
    \sum_{j = 1}^n y_j &= \sum_{j = 1}^K \sum_{i = 1}^n \alpha_i v_{ij} \\
                       &=  \sum_{i = 1}^n \sum_{j = 1}^K \alpha_i v_{ij}  \tag{Swap summation order} \\
                       &= \sum_{i = 1}^n \alpha_i \sum_{j = 1}^K v_{ij} \\
                       &= \sum_{i = 1}^n \alpha_i \tag{$\sum_{j = 1}^K v_{ij} = 1$ for all $i$} \\
                       &= 1 \tag{$\sum_{i = 1}^n \alpha_i =1$}
\end{align*}
It follows that since each $y_j \geq 0$ and $\sum_{j = 1}^K y_j = 1$, then $\by$ represents a valid discrete probability distribution over $K$ classes.
\end{proof}

\subsection{Calibration \label{appendix:calibration}}
Let $\hat{y}_1, \dots, \hat{y}_n \in [0,1]$ be the uncalibrated window probabilities, each with a corresponding binary label $y_1, \dots, y_n \in \{0,1\}$ derived from the label of the time series; that is, if $\hat{y}_i = f_{\theta}(\bw_i)$ for a window $\bw_i$, then the window label $y_i$ is inherited from the global time series it was sampled from. The goal of probability calibration is to transform each $\hat{y}_i$ into $\tilde{y}_i = g_{\phi}(\hat{y}_i)$, such that $\tilde{y}_i$ represents true likelihood of a class. Within this context, probability calibration can help mitigate the impact of temporal fluctuations and local anomalies by adjusting probabilities for each individual windows. Note that while calibration may yield more refined probability estimates with reduced noise, it is an integral intermediate step rather than a global optimization procedure on the final probabilities. For each calibration we considered, we provide a short description below.

\textbf{Isotonic regression} is a nonparametric method that fits a weighted least-squares model subject to motonicity constraints \citep{isotonic}. Formally, this can be stated as a quadratic program (QP) given by:
\begin{equation}
\min_{g} \sum_{i=1}^n w_i(\tilde{y}_i - y_i)^2 \text{ subject to } \tilde{y}_i \leq \tilde{y}_j \text{ for all } i,j \text{ where } \hat{y}_i \leq \hat{y}_j.
\end{equation}
where $\tilde{y}_i = g(\hat{y}_i)$ and $w_i \geq 0$ are weights assigned to each datapoint, which are often each set to $w_i=1$ to provide equal importance over all inputs. The monotonicity constraint ensures that uncalibrated probabilities will always map to equal or higher calibrated probabilities. Due its nonparametric nature, isotonic regression can adapt to various probability distributions across diverse datasets. However, this flexibility comes at a cost as it is also prone to overfitting on smaller datasets, potentially adjusting to noise rather than properly calibrating its inputs. \\

\textbf{Venn-Abers predictors} is based on the concept of isotonic regression but extends it to ensure validity within the framework of conformal prediction, which provides uncertainty estimates with distribution-free theoretical guarantees \citep{venn_abers, cp}. For an uncalibrated probability $\hat{y}$, two isotonic calibrators are trained:
\begin{equation}
p_0 = g_0(\hat{y}) \text{ and } p_1 = g_1(\hat{y})
\end{equation}
where $g_0$ and $g_1$ are isotonic functions derived from augmented sets. These sets include $(\hat{y},0)$ and $(\hat{y},1)$ respectively, alongside all other uncalibrated probabilities and their respective labels. The values $p_0$ and $p_1$ represent likelihoods for class $0$ and class $1$, while the interval $[p_0, p_1]$ provides an uncertainty estimate of where the true probability resides. The final calibrated probability is then given by:
\begin{equation}
\tilde{y} = \frac{p_1}{1 - p_0+ p_1}.
\end{equation}
Venn-Abers predictors provide guaranteed validity in terms of calibration, meaning the predicted probabilities closely match empirical frequencies. While we do not explicitly utilize the uncertainty interval $[p_0, p_1]$ (only the calibrated score $\tilde{y}$), this method can be effective in scenarios requiring risk assessment or critical tasks where rigorous uncertainty estimates are crucial. We leave this as a future direction to implement conformal prediction within the context of SSS, to provide uncertainty guarantees based off of window predictions, which may be useful for post-hoc interpretability. In comparsion to isotonic regression, Venn-Abers can be more computationally intensive, as it requires fitting two isotonic functions simultaneously.

\subsection{Finite-context \& infinite-context methods \label{fin_inf_methods}}
\begin{definition}
\label{model-types}
Let $\mX$ be a vector space over $\R$ and $f_{\theta}: \mX \to \mY$ be a model with parameters $\theta$ and output space $\mY$. We say that $f_{\theta}$ has \textbf{finite-context} if $\mX$ is finite-dimensional, that is, there exists some $n \in \N$ such that $\mX \cong \R^n$ as vector spaces. Whereas $f_{\theta}$ is said to have \textbf{infinite-context} if $\mX = \R^{(\infty)}$ is the space of real number sequences with finite support\footnote[2]{Every sequence in $\R^{(\infty)}$ must have finitely many non-zero terms.}.
\end{definition}
Note that this definition refers to the \textit{native} capabilities of $f_{\theta}$, without the usage of data manipulation techniques such as padding, truncation, and interpolation. We utilize this formalization to separate our baselines, so that we may better understand the advantages and limitations of both.

\subsection{SSS implementation and configurations \label{appendix:sss}}
\begin{table}[h!]
    \centering
    \caption{Hyperparameter search space for SSS (with PatchTST backbone). Best configuration is highlighted in \red{red}.}
    \label{table:sss-config}
    \begin{tabular}{@{}ll@{}}
        \toprule
        \textbf{Parameter} & \textbf{Search Values} \\
        \midrule
        $d_{\text{model}}$ & $\{16, \red{32}, 64\}$ \\
        $d_{\text{ff}}$ & $\{32, 64, \red{128}\}$ \\
        \texttt{num\_heads} & $\{2, \red{4}, 8\}$ \\
        \texttt{num\_enc\_layers} & $\{1, \red{2}, 3\}$ \\
        \texttt{lr} & $\{\red{10^{-4}}, 10^{-5}\}$ \\
        $L$ & $\{512, \red{1024}, 2048\}$ \\
        \texttt{batch\_size} & $\{2048, 4096, \red{8192}\}$  \\
        $g_{\phi}$ & \{\text{\red{isotonic regression}},\text{Venn-Abers predictors}\} \\
        \bottomrule
    \end{tabular}
\end{table}
\renewcommand{\thesection}{\Alph{section}}
\setcounter{section}{1}  
\section{Dataset and preprocessing}
\subsection{Dataset \label{appendix:dataset}}

\begin{table}[h]
    \centering
    \small
    \caption{iEEG Multicenter Dataset Summary: For each medical center, we report the total number of patients recorded ($n$), the number of patients with seizure onset zone (SOZ) annotations ($n_{\text{SOZ}}$), the number of time series recordings ($n_{\text{ts}}$), the percentage of time series labeled as SOZ ($p_{\text{SOZ}}$), the type of iEEG method used (e.g., electrocorticography, ECoG), the sampling frequency (Hz) (noting that some recordings may vary), and post-operative patient outcomes following SOZ surgical resection.}
    \label{table:data_info}
    \begin{tabular}{@{}lccccccc@{}}
        \toprule
        \textbf{Medical Center} & $n$ & $n_{\text{SOZ}}$ & $n_{\text{ts}}$ & $p_{\text{SOZ}}$ \ & \textbf{iEEG Type} & \textbf{Frequency (Hz)} & \textbf{Patient Outcomes}  \\
        \midrule
        JHH  & 7 & 3 & 1458 & 7.48\% \ & ECoG & 1000 & No \\
        NIH & 14 & 11 & 3057 & 12.23\% & ECoG & 1000 & Yes \\
        UMMC & 9 & 9 & 2967 & 5.56\% \ & ECoG & 250-1000  & Yes \\
        UMF & 5 & 1 & 129 & 25.58\% & ECoG & 1000 & No \\        
        \bottomrule
    \end{tabular}
\end{table}

Table~\ref{table:data_info} provides an overview of the iEEG Multicenter Dataset. For each cluster, we filter out patients $(n)$ who have SOZ annotations $(n_{\text{SOZ}})$. All channels for all patients are group together into one dataset per medical center, where $n_{\text{ts}}$ indicates the number of examples. However, due to the heavy imbalance between SOZ-labeled time series and non-SOZ labeled time series, the number of examples used for training and validation decreases significantly once we employ class balancing, resulting in $\lfloor 2 \cdot p_{\text{SOZ}} \cdot n_{\text{ts}} \rfloor$ examples for each medical center, which is then split into training, validation, and testing. 

\subsection{Data preprocessing}
Each patient recording contains multiple channels corresponding to individual electrodes from the iEEG device. During preprocessing, for all patients, we extract all channels and balance the dataset to have an equal number of SOZ and non-SOZ channels. After, we partition this dataset into train, validation, and test channels with a $70\%/10\%/20\%$ split respectively, and ensure that during the window sampling phase of SSS there is no temporal leakage from the test set. Each channel, or univariate time series, is $z-$score normalized to have zero mean and unit standard deviation. 

Due to the extremely long sequence length of several channels, we required downsampling to fit the dataset into memory (250GB RAM). To achieve this, for each channel we applied a 1D average pooling layer with \texttt{kernel\_size}=24 and \texttt{kernel\_stride}=12 before feeding it to the baseline model or before performing the window sampling procedure for SSS at train-time.


Finite-context methods required either padding, truncation, or interpolation due to fit each sequence into its limited context window. For each finite-context method we perform a combination of padding and truncation according to the chosen window size $L$: if the sequence length of the original time series exceeded $L$, it was truncated to obtain the first $L$ time points, otherwise if it was less than $L$, it was padded with zeros at the end of the sequence to ensure a sequence length of $L$.

\subsection{Evaluation metrics \label{appendix:metrics}}
\subsubsection{F1}
The F1 score is defined as the harmonic mean of precision and recall, given by:
\begin{align} \label{f1}
    F_1 = 2 \cdot \frac{\text{Precision} \cdot \text{Recall}}{\text{Precision} + \text{Recall}},
\end{align}
where,
\begin{align} \label{precision}
    \text{Precision} = \frac{\text{True Positives (TP)}}{\text{True Positives (TP)} + \text{False Positives (FP)}},
\end{align}
and,
\begin{align} \label{recall}
    \text{Recall} = \frac{\text{True Positives (TP)}}{\text{True Positives (TP)} + \text{False Negatives (FN)}}.
\end{align}
We select the F1 score as our primary evaluation metric for SOZ localization for the following reasons. The F1 score balances the need to correctly identify all regions of the SOZ (recall) with the need to avoid misclassifying healthy regions as SOZ (precision). This balance is crucial in surgical planning, where both missing SOZs and unnecessarily removing healthy tissue can have severe consequences. Unlike accuracy, which overlooks the difference between false positives and false negatives, the F1 score provides a more nuanced evaluation by considering both, making it well-suited in clinical contexts such as SOZ localization.

\subsubsection{AUC}
We complement the F1 score with the Area Under the Receiver Operating Characteristic curve (AUC), defined by:
\begin{align}
    \text{AUC} = \int_{0}^{1} \text{TPR}(t) \cdot \frac{d\text{FPR}(t)}{dt} dt
\end{align}
While the F1 score provides insight into the balance between precision and recall at a specific threshold, AUC assesses the model's overall discriminative ability across all thresholds. This threshold-independent evaluation is relevant for critical scenarios where the threshold maybe be adjusted from $0.5$, which is not common in clinical settings.

\renewcommand{\thesection}{\Alph{section}}
\setcounter{section}{2}  
\section{Implementation and experimental configurations \label{config}}
For each baseline, we perform grid search and optimize with respect to best accuracy score on the evaluation for all medical centers. $L$ refers to the window size parameter, $d_{\text{model}}$ is the model dimension, and $d_{\text{ff}}$ is the dimension of the feed-forward network. The grid search parameters for each baseline are shown below; for information on the implementation of SSS, see Appendix~\ref{appendix:sss}. In all experiments, we train using the Adam optimizer \citep{adam}, for $50$ epochs, with cosine learning rate annealing (one cycle with $50$ epochs in length) which adjusts the learning rate down by two orders of magnitude (e.g., $10^{-4}$ to $10^{-6}$) by the last epoch. We also implement early stopping with a patience of $15$, and apply learnable instance normalization \citep{revin} for each input. For most of the baselines we use a dropout rate of $0.2-0.3$, and weight decay to $10^{-4}-10^{-5}$, but do not explicitly tune these parameters in our grid search. For finite-context methods we set the batch size to the entire dataset ($596$ individual univariate time series for all clusters), whereas infinite-context methods required batch size of $1$ due to their variable-length. The code for SSS and baseline implementations is available at the following anonymous link: \href{https://anonymous.4open.science/r/sss-0D75/README.md}{\texttt{https://anonymous.4open.science/r/sss-0D75/}}.

\textbf{PatchTST}: We adapt the official implementation \href{https://github.com/yuqinie98/PatchTST}{\texttt{github.com/yuqinie98/PatchTST}}, but swap out the attention module with the native PyTorch $\href{https://pytorch.org/docs/stable/generated/torch.nn.MultiheadAttention.html}{\texttt{torch.nn.MultiheadAttention}}$ module.
\begin{table}[h!]
    \centering
    \caption{Hyperparameter search space for PatchTST. Best configuration is highlighted in \red{red}.}
    \begin{tabular}{@{}ll@{}}
        \toprule
        \textbf{Parameter} & \textbf{Search Values} \\
        \midrule
        $d_{\text{model}}$ & $\{16, \red{32}, 64\}$ \\
        $d_{\text{ff}}$ & $\{\red{32}, 64, 128\}$ \\
        \texttt{num\_heads} & $\{2, \red{4}, 8\}$ \\
        \texttt{num\_enc\_layers} & $\{1, 2, 3\}$ \\
        \texttt{lr} & $\{10^{-4}, \red{10^{-5}}\}$ \\
        $L$ & $\{1000, 3000, 5000, \red{10000}\}$ \\
        \bottomrule
    \end{tabular}
\end{table}

\textbf{TimesNet}: We use the official implementation \href{https://github.com/thuml/Time-Series-Library}{\texttt{github.com/thuml/Time-Series-Library}}.

\begin{table}[h!]
    \centering
    \caption{Hyperparameter search space for TimesNet. Best configuration is highlighted in \red{red}.}
    \begin{tabular}{@{}ll@{}}
        \toprule
        \textbf{Parameter} & \textbf{Search Values} \\
        \midrule
        $d_{\text{model}}$ & $\{16, 32, \red{64}\}$ \\
        $d_{\text{ff}}$ & $\{\red{32}, 64, 128\}$ \\
        \texttt{num\_kernels} & $\{\red{4}, 6\}$ \\
        \texttt{top\_k} & $\{\red{3}, 5\}$ \\
        \texttt{num\_enc\_layers} & $\{1, \red{2}\}$ \\
        \texttt{lr} & $\{10^{-4}, \red{10^{-5}}\}$ \\
        $L$ & $\{1000, 3000, \red{5000}, 10000\}$ \\
        \bottomrule
    \end{tabular}
\end{table}

\pb
\textbf{ModernTCN}: We use the official implementation \href{https://github.com/luodhhh/ModernTCN}{\texttt{github.com/luodhhh/ModernTCN}}. 
\begin{table}[h!]
    \centering
    \caption{Hyperparameter search space for ModernTCN. Best configuration is highlighted in \red{red}.}
    \begin{tabular}{@{}ll@{}}
        \toprule
        \textbf{Parameter} & \textbf{Search Values} \\
        \midrule
        \texttt{lr} & $\{\red{10^{-4}}, 10^{-5}\}$ \\
        $d_{\text{model}}$ & $\{16, \red{32}, 64\}$ \\
        \texttt{num\_enc\_layers} & $\{\red{1}, 2\}$ \\
        \texttt{large\_size\_kernel} & $\{\red{9}, 13, 21, 51\}$ \\
        \texttt{small\_size\_kernel} & $\red{5}$ \\
        \texttt{dw\_dims} & $\{\red{128}, 256\}$ \\
        \texttt{ffn\_ratio} & $\{1, \red{4}\}$ \\
        $L$ & $\{1000, 3000, 5000, \red{10000}\}$ \\
        \bottomrule
    \end{tabular}
\end{table}

\textbf{DLinear}: We use the implementation from \href{https://github.com/thuml/Time-Series-Library}{\texttt{github.com/thuml/Time-Series-Library}}.

\begin{table}[h!]
    \centering
    \caption{Hyperparameter search space for DLinear. Best configuration is highlighted in \red{red}.}
    \begin{tabular}{@{}ll@{}}
        \toprule
        \textbf{Parameter} & \textbf{Search Values} \\
        \midrule
        \texttt{moving\_avg} & $\{10, \red{25}\}$ \\
        \texttt{lr} & $\{10^{-4}, 10^{-5}, \red{10^{-6}}\}$ \\
        $L$ & $\{1000, 3000, 5000, \red{10000}\}$ \\
        \bottomrule
    \end{tabular}
\end{table}

\textbf{ROCKET}: We use the official implementation \href{https://github.com/angus924/rocket/blob/master/code/rocket_functions.py}{\texttt{github.com/angus924/rocket}} and follow the standard implementation of $10,000$ kernels.

\begin{table}[h!]
    \centering
    \caption{Hyperparameter search space for ROCKET. Best configuration is highlighted in \red{red}.}
    \begin{tabular}{@{}ll@{}}
        \toprule
        \textbf{Parameter} & \textbf{Search Values} \\
        \midrule
        \texttt{num\_kernels} & \{$\red{10000}$\} \\
        \texttt{lr} & $\{\red{10^{-4}}, 10^{-5}, 10^{-6}\}$ \\
        \bottomrule
    \end{tabular}
\end{table}


\textbf{Mamba}: We use the package \href{https://github.com/alxndrTL/mamba.py}{\texttt{mambapy}} which builds upon the official Mamba implementation.
\begin{table}[h!]
    \centering
    \caption{Hyperparameter search space for Mamba. Best configuration is highlighted in \red{red}.}
    \begin{tabular}{@{}ll@{}}
        \toprule
        \textbf{Parameter} & \textbf{Search Values} \\
        \midrule
        \texttt{lr} & $\{10^{-4}, 10^{-5}, \red{10^{-6}}\}$ \\
        $d_{\text{model}}$ & $\{16, 32, 64\}$ \\
        \texttt{num\_enc\_layers} & $\{1, \red{2}, 3\}$ \\
        \bottomrule
    \end{tabular}
\end{table}
We also employ patching from \citep{patchtst}, which we observed led to greater to performance, with $\texttt{patch\_size}=64$ and $\texttt{patch\_stride}=16$.  

\textbf{GRUs}: We utilized the native PyTorch module \href{https://pytorch.org/docs/stable/generated/torch.nn.GRU.html}{\texttt{torch.nn.GRU}}. 

\begin{table}[h!]
    \centering
    \caption{Hyperparameter search space for GRUs. Best configuration is highlighted in \red{red}.}
    \begin{tabular}{@{}ll@{}}
        \toprule
        \textbf{Parameter} & \textbf{Search Values} \\
        \midrule
        \texttt{lr} & $\{\red{10^{-4}}, 10^{-5}, 10^{-6}\}$ \\
        $d_{\text{model}}$ & $\{16, \red{32}, 64\}$ \\
        \texttt{num\_enc\_layers} & $\{1, 2, \red{3}\}$ \\
        \texttt{bidirectional} & $\{\red{\text{True}}, \text{False}\}$ \\
        \bottomrule
    \end{tabular}
\end{table}

\subsection{Computational Resources}
Our experiments were conducted on $4\text{x}$ NVIDIA RTX 6000 Ada Generation GPUs using the PyTorch framework with CUDA \citep{pytorch}. Although we have not tracked explicitly the amount of consumed GPU hours, all experiments can be conducted for $5$ seeds no more than $2-3$ hours with a similar setup.

\end{document}